\newtheorem{theorem}{Theorem}
\newtheorem{lemma}[theorem]{Lemma}
\newtheorem{cor}[theorem]{Corollary}
\newtheorem{definition}{Definition}
\newtheorem*{theorem*}{Theorem}
\newtheorem*{lemma*}{Lemma}
\definecolor{red}{rgb}{0.957,0.498,0.447}
\definecolor{lightred}{rgb}{0.965,0.702,0.675}
\definecolor{green}{rgb}{0.553,0.824,0.773}
\definecolor{lightgreen}{rgb}{0.729,0.890,0.863}
\definecolor{blue}{rgb}{0.498,0.698,0.835}
\definecolor{lightblue}{rgb}{0.702,0.820,0.906}
\definecolor{orange}{rgb}{0.969,0.718,0.427}
\definecolor{lightorange}{rgb}{0.988,0.831,0.631}
\definecolor{purple}{rgb}{0.749,0.737,0.855}
\title{PowerMLP: An Efficient Version of KAN}
\author {
    Ruichen Qiu\textsuperscript{\rm 1,\rm 2},
    Yibo Miao\textsuperscript{\rm 2,3},
    Shiwen Wang\textsuperscript{\rm 3},
    Lijia Yu\textsuperscript{\rm 4},
    Yifan Zhu\textsuperscript{\rm 2,3},
    Xiao-Shan Gao\textsuperscript{\rm 2}\thanks{Corresponding author.}
}
\begin{document}

\maketitle

\begin{abstract}
The Kolmogorov-Arnold Network (KAN) is a new network architecture known for its high accuracy in several tasks such as function fitting and PDE solving.
The superior expressive capability of KAN arises from the Kolmogorov-Arnold representation theorem and learnable spline functions.
However, the computation of spline functions involves multiple iterations, which renders KAN significantly slower than MLP, thereby increasing the cost associated with model training and deployment.
The authors of KAN have also noted that {\em ``the biggest bottleneck of KANs lies in its slow training. KANs are usually 10x slower than MLPs, given the same number of parameters.''}
To address this issue, we propose a novel MLP-type neural network PowerMLP that employs simpler non-iterative spline function representation, offering approximately the same training time as MLP while theoretically demonstrating stronger expressive power than KAN.
Furthermore, we compare the FLOPs of KAN and PowerMLP, quantifying the faster computation speed of PowerMLP. Our comprehensive experiments demonstrate that PowerMLP generally achieves higher accuracy and a training speed about 40 times faster than KAN in various tasks.
\end{abstract}

%

\section{Introduction}

A long-standing problem of deep learning is the identification of more effective neural network architectures.
The Kolmogorov-Arnold Network (KAN), introduced by \citet{liu2024kan}, presents a new architecture.
Unlike traditional MLP, which places activation functions on nodes, KAN employs learnable univariate spline functions as activation functions placed on edges.
The expressive power of KAN is derived from the Kolmogorov-Arnold representation theorem and the property of spline functions.
Due to its exceptional expressiveness, KAN achieves high accuracy and interpretability in multiple tasks such as function fitting and PDE solving.
Significant performance improvements using KAN have been observed in time series prediction~\cite{ inzirillo2024sigkansignatureweightedkolmogorovarnoldnetworks}, graph data processing~\cite{kiamari2024gkangraphkolmogorovarnoldnetworks}, and explainable natural language processing~\cite{boris2024kolmogorov}.

Unfortunately, despite the impressive performance of KAN, it faces a critical drawback: slow inference and training speeds, which increase the cost associated with training and deploying the model.
\citet{liu2024kan} highlight in the KAN paper that {\em ``the biggest bottleneck of KANs lies in its slow training. KANs are usually 10x slower than MLPs, given the same number of parameters.''}
The inefficient training and inference latency of KAN plays a crucial role in ensuring a positive user experience and low computational resource requirements.

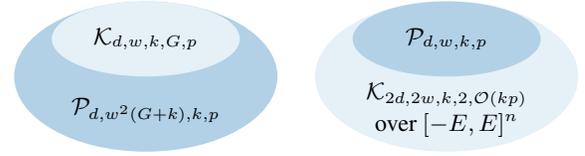
\begin{figure}[t]
    \centering
    \begin{tikzpicture}
        \node (powermlp) at (-3.5,-0.5) [ellipse, draw=none, fill=white!40!blue, minimum width=3.5cm, minimum height=2cm, inner sep=0pt] {};
        \node (kan) at (-3.5,0) [ellipse, draw=none, fill=white!80!blue, minimum width=2.5cm, minimum height=1cm, inner sep=0pt] {};
        \node (text1) at (-3.5,0) [draw=none,fill=none] {\small \textcolor{black}{$\mathcal{K}_{d,w,k,G,p}$}};
        \node (text2) at (-3.5,-0.95) [draw=none,fill=none]{\small \textcolor{black}{$\mathcal{P}_{d,w^2(G+k),k,p}$}};
        \node (kan2) at (0.5,-0.5) [ellipse, draw=none, fill=white!80!blue, minimum width=3.5cm, minimum height=2cm, inner sep=0pt] {};
        \node (powermlp2) at (0.5,0) [ellipse, draw=none, fill=white!40!blue, minimum width=2.5cm, minimum height=1cm, inner sep=0pt] {};
        \node (text1) at (0.5,0) [draw=none,fill=none] {\small \textcolor{black}{$\mathcal{P}_{d,w,k,p}$}};
        \node (text2) at (0.5,-0.75) [draw=none,fill=none]{\small \textcolor{black}{$\mathcal{K}_{2d,2w,k,2,\mathcal{O}(kp)}$}};
        \node (text3) at (0.5,-1.15) [draw=none,fill=none]{\small \textcolor{black}{over $[-E,E]^n$}};
\end{tikzpicture}
    \captionof{figure}{
    PowerMLPs define a strictly larger function space than KANs over $\mathbb{R}^n$ (Corollary \ref{cor-k2p1}), and define the same function space over $[-E,E]^n$ for any $E\in\mathbb{R}_+$ (Corollary \ref{cor-p2k}), where $n$ is the input dimension.
    $\mathcal{P}_{d,w,k,p}$ is the set of all PowerMLP networks with depth $d$, width $w$, $k$-th power ReLU activation function, and $p$ nonzero parameters.
${\mathcal{K}}_{d,w,k,G,p}$ is the set of all KAN networks with depth $d$, width $w$, using $(k,G)$-spline (see Eq. \eqref{eq-sfun}), and $p$ nonzero parameters.
}
  \label{fig:funcspace}
\end{figure}

Upon examining the structure of KAN,
we contribute the primary cause of the slow computation speed to the spline activation function.
Specifically, a $k$-order spline function is a linear combination of $k$-order B-splines, and each $k$-order B-spline requires construction through $\mathcal{O}(k^2)$ iterations by de Boor-Cox formula in KAN's paper, resulting in slow computations.
To address this issue, we need to identify a more efficient method to obtain B-splines, moving away from the recursive computation of the de Boor-Cox formula.

The $k$-order B-spline was initially defined as
$k$-th divided difference of the truncated power function~\cite{curry1947on}. Consequently, B-splines can also be expressed as linear combinations of powers of ReLUs~\cite{greville1969theory}.
Inspired by this, we introduce a novel MLP-type nueral network by incorporating a basis function into the MLP with powers of ReLU as activation function, termed \textbf{PowerMLP}, as shown in Figure \ref{fig:rrpn}.
%
%
From Figure \ref{fig:funcspace}, PowerMLPs define a strictly larger function space than KANs over $\mathbb{R}^n$ and define the same function space over $[-E,E]^n$, indicating that PowerMLP can serve as a viable substitute for KAN.
Intuitively, the B-spline can be represented by PowerMLPs without iterative recursion, leading to faster computation. In addition, the activation functions of PowerMLP are on the nodes instead of the edges, inhering the fast training advantages of MLP.
We further choose Floating Point Operations (FLOPs) as the metric and demonstrate that the FLOPs of KAN are more than 10 times those of PowerMLP, theoretically explaining why PowerMLP is significantly faster than KAN.

We conducted extensive experiments to validate the advantages of PowerMLP in various experimental settings, including those of KAN and more complicated tasks such as image classification and language processing.
The results show that PowerMLP trains about 40 times faster than KAN and achieves the best performance in most experiments.

\section{Related Work}

\noindent{\bf Network Architecture.}
Multilayer Perceptron (MLP) is a basic form of neural networks used primarily for supervised learning tasks~\cite{haykin1998neural}. One of the activation functions commonly used with MLP is the rectified linear unit (ReLU)~\cite{glorot2011deep}, which is further extended to LeakyReLU~\cite{maas2013rectifier}, PReLU~\cite{he2015delving}, RePU~\cite{li2020better,li2020powernet}, and GeLU~\cite{hendrycks2023gaussianerrorlinearunits}.
\citet{lecun1989backpropagation} introduced CNNs specialized for images and \citet{vaswani2017attention} introduced Transformers, which have become a cornerstone for large language models.
%
PowerMLP can be obtained from RePU~\cite{li2020better,li2020powernet} by adding a basis function to each layer.

\noindent
{\bf Kolmogorov-Arnold Network.}
KAN~\cite{liu2024kan} is a new network architecture using learnable activation functions on edges. In small-scale AI + Science tasks, KAN outperforms MLP in terms of both accuracy and interpretability.
Furthermore, KAN also achieves remarkable performance in time series prediction~\cite{xu2024kolmogorovarnoldnetworkstimeseries, inzirillo2024sigkansignatureweightedkolmogorovarnoldnetworks}, graph-structured data processing~\cite{kiamari2024gkangraphkolmogorovarnoldnetworks, decarlo2024kolmogorovarnoldgraphneuralnetworks}, and explainable natural language processing~\cite{boris2024kolmogorov}.
Meanwhile, notable approaches are taken to improve KAN's interpretability and performance, including incorporating wavelet functions into KAN~\cite{bozorgasl2024wavkanwaveletkolmogorovarnoldnetworks}, replacing the basis function to rational functions~\cite{aghaei2024rkanrationalkolmogorovarnoldnetworks}, and faster implementation by approximation of radial basis functions~\cite{li2024kolmogorovarnoldnetworksradialbasis}.
However, the computational speed of KAN and these variants is significantly slower than MLP, since they have a structure similar to that of KAN.

\noindent{\bf PowerMLP vs. KAN.}
PowerMLP can be considered as an MLP-type representation of KAN.
In Section 6 of \cite{liu2024kan}, KANs' ``limitations and future directions'' are discussed.
Most of these limitations of KAN can be eliminated or mitigated by PowerMLP.
(1) In algorithmic aspects, the authors noted that ``KANs are usually 10x slower than MLPs.'' PowerMLPs, which are about 40 times faster than KANs in our experiments, eliminate this limitation.
(2) In mathematical aspects, the authors call for a more general approximation result beyond the depth-2 Kolmogorov-Arnold representations.
Such a result is given in Corollary \ref{cor-app} using our proposed PowerMLP.
(3) In the application aspects, the authors call for  ``integrating KANs into current architectures.''
This can be done by replacing the MLP in these architectures with PowerMLP more naturally than KAN.
%

\section{Preliminaries}
In this section, we introduce the preliminary knowledge of Kolmogorov-Arnold Network (KAN)~\cite{liu2024kan}.

\subsection{Spline Function}
The following {\em de Boor-Cox formula}~\cite{cox1972numerical,deBoor1978practical} for B-spline is used to define KAN.
\begin{definition}[B-spline]\label{def:bspline}
    Let $t:=(t_j)$ be a nondecreasing sequence of real numbers, called {\em knot sequence}.
    The zero-order B-spline on $(t_j,t_{j+1})$ is defined as~\footnote{Note that de Boor-Cox formula defines the function in Eq. \eqref{eq-bsp} as order $1$. To keep the same mark with KAN, we adjust it to $0$.}
    {\small
    \begin{equation}
    \label{eq-bsp}
        B_{j,0,t}(x)=\left\{\begin{aligned}
            &1,\quad&t_j\leq x<t_{j+1},\\
            &0,&\mathrm{otherwise}.
        \end{aligned}\right.
    \end{equation}}
    \hskip-4pt
    Then the $j$-th $k$-order {\em normalized B-spline} for the {\em knot sequence} $t$ is defined recursively as
    {\small
    \begin{equation}\label{eq:recur}
        \begin{aligned}
            B_{j,k,t}(x)
            &=\frac{x-t_j}{t_{j+k}-t_j}B_{j,k-1,t}(x)\\
            &\quad+\frac{t_{j+k+1}-x}{t_{j+k+1}-t_{j+1}}B_{j+1,k-1,t}(x),\quad\mathrm{for\,}k\geq1.
        \end{aligned}
    \end{equation}}
\end{definition}

To maintain consistency with KAN's setting, let
$t=(\underline{t_{-k}, \cdots,t_{-1}},$
$t_0,t_1, \cdots,t_G,\underline{t_{G+1}, \cdots,t_{G+k}})$
be an \textit{increasing} knot sequence, called {\em a $(k,G)$-grid}.
Through a linear combination of B-splines on $t$, we provide the definition of spline functions as follows.
\begin{definition}[Spline Function]
    \label{def-splinef}
    Let $t$ be a $(k,G)$-grid. A {\em $k$-order spline function} for the knot sequence $t$ is given by the following linear combination of B-splines
    {\small \begin{align}
    \label{eq-sfun}
        \mathrm{spline}_{k,G}(x)=\sum_{j=-k}^{G-1} c_jB_{j,k,t}(x),
    \end{align}}
    \hskip-4pt where $c_j\in\mathbb{R}$ are the coefficients of the spline function. For simplicity, \eqref{eq-sfun} is called a $(k,G)$-spline function.
\end{definition}

\subsection{Kolmogorov-Arnold Networks}\label{sec:kan}
 A  KAN network \cite{liu2024kan} is a composition of $L$ layers: given an input vector $\mathbf{x}\in\mathbb{R}^{n_0}$, the output of KAN is
{\small \begin{equation*}
    \mathrm{KAN}(\mathbf{x})=(\Phi_{L-1}\circ \cdots\circ\Phi_1\circ\Phi_0)(\mathbf{x}),
\end{equation*}}
\hskip-4pt
where $\Phi_\ell$ is the function matrix corresponding to the {\em $\ell$-th layer}. The dimension of the input vector of $\Phi_\ell$ is denoted as $n_\ell$, and $\Phi_\ell$ is defined below:
\begin{equation*}\label{eq:kanlayer}
    \Phi_\ell(\cdot)=
    \begin{pmatrix}
        \phi_{l,1,1}(\cdot) & \phi_{l,1,2}(\cdot) & \cdots & \phi_{l,1,n_l}(\cdot)\\
        \phi_{l,2,1}(\cdot) & \phi_{l,2,2}(\cdot) & \cdots & \phi_{l,2,n_l}(\cdot)\\
        \vdots & \vdots & & \vdots\\
        \phi_{l,n_{l+1},1}(\cdot) & \phi_{l,n_{l+1},2}(\cdot) & \cdots & \phi_{l,n_{l+1},n_l}(\cdot)\\
    \end{pmatrix},
\end{equation*}
where $\phi_{\ell,q,p}$ is a residual activation function:
{
\begin{equation}\label{eq:kan_activation}
    \phi_{\ell,q,p}(x_p)=u_{\ell,q,p}b(x_p)+v_{\ell,q,p}\mathrm{spline}(x_p).
\end{equation}}
$b(x)$ is a non-parameter basis function~\footnote{Basis function is $b(x)={x}/{(1+e^{-x})}$ in KAN's paper.}
similar to residual shortcut, which is included to improve training. $\mathrm{spline}(x)$ is defined in Definition \ref{def-splinef}.
The KAN defined above, called a KAN of $k$-order, has depth $L$, width $W=\max_{i=0}^{L-1} n_i$, and $O(W^2L(G+k))$ parameters.

\section{PowerMLP}
\label{sec-theory}

While using spline functions as activations enables KAN to achieve excellent performance, computing spline functions involves multiple iterations, leading to a high number of FLOPs (see Table \ref{tab:flops}). This results in KAN's computation speed being slower than that of MLP, thereby increasing the cost associated with training and deploying the model.

To replace KAN with a network structure that offers similar expressiveness but faster computation, we
present a simpler representation of spline functions,
avoiding the recursive calculations of de Boor-Cox formula. In Section \ref{sec:powermlp}, we introduce a novel network, named \textbf{PowerMLP}.
Through theorems in Sections \ref{sec:kan2powermlp} and \ref{sec:powermlp2kan}, we demonstrate that KAN and PowerMLP define the same function space over bounded intervals, indicating their interchangeability.
Additionally, in Section \ref{sec-FLOP}, we compare the FLOPs of KAN and PowerMLP, revealing that PowerMLP theoretically achieves faster speeds than KAN.
Proofs are given in Appendix A.

\subsection{PowerMLP}
\label{sec:powermlp}
Referring back to the de Boor-Cox formula in Eq. \eqref{eq:recur}, we attribute the slow computation speed of KAN to the $\mathcal{O}(k^2)$ iterations of calculation required for constructing $k$-order B-spline.
According to the initial definition of $k$-order B-spline from the $k$-th divided difference of the truncated power function~\cite{curry1947on} and subsequent work by \citet{greville1969theory}, we express B-splines through a non-iterative approach as linear combinations of powers of ReLU function.
Thus, we introduce a novel MLP network that incorporates a basis function into the MLP with powers of ReLU as activations, termed \textbf{PowerMLP}.

The $k$-th power of ReLU~\cite{Mhaskar1993ApproximationPO, li2020better} is defined as:
\begin{equation}
\label{eq-sigmak}
    \sigma_k(x)=\left(\mathrm{ReLU}(x)\right)^k=\left(\max(0,x)\right)^k
    \quad k\in \mathbb{Z}_+.
\end{equation}
The fully connected feedforward neural network with the $k$-th power of ReLU as the activation function is referred to as {\em ReLU-$k$ MLP}. We integrate a basis function into ReLU-$k$ MLP and define PowerMLP following the form of KAN.
\begin{definition}[PowerMLP]
    A PowerMLP is a neural network composition of $L$ layers:
    {\small\begin{equation}\label{eq:rrndef}
        \begin{aligned}
            &\mathrm{PowerMLP}(\mathbf{x})=(\Psi_{L-1}\circ\cdots\circ\Psi_1\circ\Psi_0)(\mathbf{x}), \quad\mathrm{where}\\
            &\Psi_\ell(\mathbf{x}_\ell)=\left\{\begin{aligned}
                &
                \alpha_\ell b(\mathbf{x}_\ell)+ \sigma_k(\omega_\ell\mathbf{x}_\ell+\gamma_\ell),&\mathrm{for\,} \ell<L-1,\\
                &\omega_{L-1}\mathbf{x}_{L-1}+\gamma_{L-1},&\mathrm{for\,} \ell=L-1.
            \end{aligned}\right.
        \end{aligned}
    \end{equation}}
    \hskip-4pt
$\alpha_\ell\in\mathbb{R}^{m_{\ell+1}\times m_{\ell}},
\omega_\ell\in\mathbb{R}^{m_{\ell+1}\times m_{\ell}},
\gamma_\ell\in\mathbb{R}^{m_{\ell}\times 1}$ are trainable parameters. $b(\mathbf{x})$
is a basis function that performs the same operation as the basis function in KAN on each component of $\mathbf{x}$,
and $\sigma_k$ is the ReLU-$k$ activation function.
A PowerMLP using $\sigma_k$ as activation function
is called $k$-order PowerMLP.
The {\em width and depth of the PowerMLP} are defined to be $\mathrm{width}=\max_{l=0}^{L-1}\{m_l\},\mathrm{depth}=L$.
Refer to Figure \ref{fig:rrpn} for an illustration.
\end{definition}

Through non-parameter activation functions $\sigma_k$ and non-iteration calculation, PowerMLP computes faster than KAN.
Furthermore, they can represent each other within bounded intervals,
a condition usually met in practical scenarios.

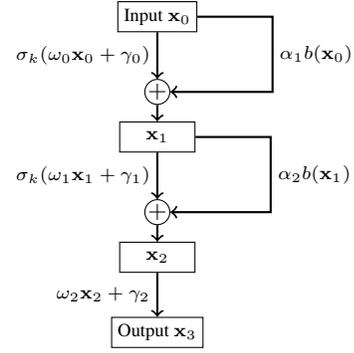
\begin{figure}[t]
    \centering
    \scriptsize
    \begin{tikzpicture}[node distance=1cm, every node/.style={fill=white, draw=black, rectangle, minimum width=1cm, minimum height=0.4cm}]

        \node (layer1) {Input $\mathbf{x}_0$};
        \node (layer2) [below of=layer1,node distance=1.6cm] {$\mathbf{x}_1$};
        \node (layer3) [below of=layer2,node distance=1.6cm] {$\mathbf{x}_2$};
        \node (layer4) [below of=layer3] {Output $\mathbf{x}_3$};
        \node (plus1) [circle, draw, fill=white, minimum size=0.1cm, inner sep=0pt, below of=layer1, node distance=1cm] {\small $+$};
        \node (plus2) [circle, draw, fill=white, minimum size=0.1cm, inner sep=0pt, below of=layer2, node distance=1cm] {\small $+$};

        \draw [thick,->] (layer1) -- node[left,draw=none,fill=none] {$\sigma_k(\omega_0\mathbf{x}_0+\gamma_0)$} (plus1) ;
        \draw [thick,->] (plus1) -- (layer2);
        \draw [thick,->] (layer2) -- node[left,draw=none,fill=none] {$\sigma_k(\omega_1\mathbf{x}_1+\gamma_1)$} (plus2);
        \draw [thick,->] (plus2) -- (layer3);
        \draw [thick,->] (layer3) -- node[left,draw=none,fill=none] {$\omega_2\mathbf{x}_2+\gamma_2$} (layer4);
        \draw [thick,->] (layer1.east) -- ++(1cm, 0) |- node[right,yshift=0.5cm,draw=none,fill=none] {$\alpha_1 b(\mathbf{x}_0)$} (plus1);
        \draw [thick,->] (layer2.east) -- ++(1cm, 0) |- node[right,yshift=0.5cm,draw=none,fill=none] {$\alpha_2 b(\mathbf{x}_1)$} (plus2);

    \end{tikzpicture}
    \captionof{figure}{Structure of a 3-layer PowerMLP. The first two layers are calculated by: (1) affine transformation, (2) $k$-th power of ReLU activation, (3) addition with a basis function. The last layer contains only an affine transformation.}
    \label{fig:rrpn}
\end{figure}

\subsection{PowerMLP can represent KAN}
\label{sec:kan2powermlp}
In this section, we show that any KAN can be represented by a PowerMLP.
We first give the following connection between B-splines and the k-th power of ReLU $\sigma_{k}$.

\begin{lemma}[Represent the B-spline with powers of ReLU]\label{lem:bspline}
If $t_u \neq t_v (\forall u \neq v)$, then the $k$-order B-spline on the knot sequence $t=(t_j, \cdots, t_{j+k+1})$ can be represented as a linear combination of $\sigma_{k}$ functions:
    {\small\begin{align*}
    \label{eq-Bs}
        B_{j,k,t}(x)&=\sum_{i=j}^{j+k+1}\frac{t_{j+k+1}-t_j}{\prod_{l=j}^{j+k+1,l\neq i}(t_l-t_i)}\sigma_{k}(x-t_i).
    \end{align*}}
\end{lemma}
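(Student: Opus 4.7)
The plan is to derive the lemma from the classical Curry--Schoenberg representation of B-splines as divided differences of a truncated power function. For a strictly increasing knot sequence $t_j,\ldots,t_{j+k+1}$, one has the identity $B_{j,k,t}(x) = (t_{j+k+1}-t_j)\,[t_j,\ldots,t_{j+k+1}]\,(\cdot - x)_+^k$, where $[t_j,\ldots,t_{j+k+1}]f$ denotes the $(k{+}1)$-st divided difference of $f$ with respect to the knot variable. I would first establish this identity. It can either be quoted as a classical fact or, for a self-contained argument, proved by induction on $k$ using the de Boor--Cox recursion \eqref{eq:recur}: the base case $k=0$ reduces to the indicator on $[t_j,t_{j+1})$, and the inductive step follows from the Leibniz rule for divided differences applied to the two summands of the recursion.

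Next, since the knots are distinct, I would invoke the Lagrange form of the divided difference, $[t_j,\ldots,t_{j+k+1}]f = \sum_{i=j}^{j+k+1} f(t_i)/\prod_{l\neq i}(t_i - t_l)$, and specialise it to $f(s)=(s-x)_+^k$. This yields the ``left-sided'' intermediate formula $B_{j,k,t}(x) = (t_{j+k+1}-t_j)\sum_{i=j}^{j+k+1} (t_i-x)_+^k/\prod_{l\neq i}(t_i-t_l)$.

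The final step is to convert this expression into the form stated in the lemma, which uses the right-sided truncated power $\sigma_k(x-t_i)=(x-t_i)_+^k$. I would use the elementary identity $(x-t_i)^k = (x-t_i)_+^k + (-1)^k(t_i-x)_+^k$ together with the fact that the $(k{+}1)$-st divided difference of the degree-$k$ polynomial $s\mapsto(x-s)^k$ at any $k{+}2$ distinct nodes vanishes. This substitution replaces each $(t_i-x)_+^k$ by $(-1)^{k+1}(x-t_i)_+^k$ inside the sum, and since the denominator $\prod_{l\neq i}(t_i-t_l)$ contains exactly $k{+}1$ factors, the sign $(-1)^{k+1}$ is absorbed by rewriting the product as $\prod_{l\neq i}(t_l-t_i)$, producing precisely the expression in the statement.

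The main obstacle is the sign and orientation bookkeeping in this last step: the standard Curry--Schoenberg formula is stated with $(\cdot-x)_+^k$, but the lemma is written in terms of $\sigma_k(x-t_i)$ and with the denominator $\prod_{l\neq i}(t_l-t_i)$ rather than $\prod_{l\neq i}(t_i-t_l)$. Once the reflection identity and the polynomial divided-difference cancellation are in place, the parity of the number of factors ($k{+}1$) forces the two sign flips to align, but this requires careful tracking; everything else reduces to standard univariate spline theory.
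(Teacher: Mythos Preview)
Your argument is correct, but it follows a genuinely different route from the paper. The paper proves the identity by a direct induction on $k$: it checks the base case $k=0$ by hand, then plugs the claimed formula into the de~Boor--Cox recursion \eqref{eq:recur} for $B_{j,k+1,t}$, separates the two boundary terms $i=j$ and $i=j+k+2$ from the interior terms, uses $(x-t_j)\sigma_k(x-t_j)=\sigma_{k+1}(x-t_j)$ (and the analogous identity at the other end), and simplifies the resulting rational expressions to recover the formula at order $k{+}1$. No divided differences appear.

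Your approach instead goes through the Curry--Schoenberg identity $B_{j,k,t}(x)=(t_{j+k+1}-t_j)\,[t_j,\dots,t_{j+k+1}](\cdot-x)_+^k$ and the Lagrange form of the divided difference, then flips $(t_i-x)_+^k$ to $\sigma_k(x-t_i)$ via the vanishing of the $(k{+}1)$-st divided difference of the degree-$k$ polynomial $s\mapsto(s-x)^k$ and the parity count of the $k{+}1$ factors in the denominator. This is more conceptual and explains \emph{why} the formula has that particular shape (it is literally the Lagrange expansion of a divided difference), at the cost of importing two classical facts. The paper's induction is more elementary and fully self-contained, but the algebraic simplification in the inductive step is heavier and less illuminating. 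If you want a self-contained write-up, note that proving Curry--Schoenberg from \eqref{eq:recur} via Leibniz is essentially the same induction the paper does, so the two proofs converge once you unpack everything; the real difference is whether you expose the divided-difference structure or keep it hidden inside the computation.
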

Since each spline activation function in KAN is a linear combination of B-splines, we can represent it as a linear combination of ReLU-$k$. By performing this operation on each component of the input vector $\mathbf{x}$ and incorporating a basis function, we represent a KAN layer as a PowerMLP layer. Since KAN and PowerMLP are both compositions of layers, it means that a KAN can be represented as a PowerMLP. We have the following theorem.
\begin{theorem}[KAN is a Subset of PowerMLP]
\label{th-k4p}
    Fix the input dimension of networks as $n$. Let $\mathcal{K}_{d,w,k,G,p}$ be the set of all KAN networks with depth $d$, width $w$, $p$ nonzero parameters, using $(k,G)$-spline; and $\mathcal{P}_{d,w,k,p}$ be the set of all PowerMLPs with depth $d$, width $w$, $k$-th power of ReLU and $p$ nonzero parameters. Then it holds
    \begin{equation}
    \label{eq-k2p0}
    \mathcal{K}_{d,w,k,G,p}\subset\mathcal{P}_{d,w^2(G+k),k,p}.
    \end{equation}
\end{theorem}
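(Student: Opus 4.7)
The plan is to give an explicit construction that, for every KAN $F\in\mathcal{K}_{d,w,k,G,p}$, produces a PowerMLP $\tilde F\in\mathcal{P}_{d,w^2(G+k),k,p}$ computing the same function. The construction is driven by Lemma~\ref{lem:bspline} and proceeds layer-by-layer.

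First I would use Lemma~\ref{lem:bspline} to rewrite every spline appearing in $F$ as a flat linear combination of shifted $\sigma_k$ terms. Since each $(k,G)$-spline is a sum of $G+k$ B-splines and each B-spline is (by the Lemma) an explicit linear combination of $k+2$ shifted $\sigma_k$ functions whose shifts are drawn from the grid, every KAN activation $\phi_{\ell,q,p}(x)=u_{\ell,q,p}b(x)+v_{\ell,q,p}\mathrm{spline}_{k,G}(x)$ can be rewritten as $u_{\ell,q,p}b(x)+\sum_{i}\tilde\beta_{\ell,q,p,i}\,\sigma_k(x-t_{\ell,q,p,i})$ with at most $G+k$ non-trivial $\sigma_k$-terms. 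This flattens each KAN activation into exactly the shape the PowerMLP layer is built on, and eliminates the de Boor--Cox recursion.

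Next I would construct the $\ell$-th PowerMLP layer $\Psi_\ell$ so that its output vector $\mathbf{z}_{\ell+1}$ stores exactly the scalar evaluations downstream layers need. The affine parameters $(\omega_\ell,\gamma_\ell)$ are chosen so that each row of $\omega_\ell\mathbf{z}_\ell+\gamma_\ell$ reads off the relevant KAN hidden value $y_{\ell,p}$ from $\mathbf{z}_\ell$ and subtracts a shift; then $\sigma_k(\omega_\ell\mathbf{z}_\ell+\gamma_\ell)$ delivers the required $\sigma_k(y_{\ell,p}-t)$ componentwise. The $\alpha_\ell b(\mathbf{z}_\ell)$ channel is used to extract and carry forward the basis evaluations $b(y_{\ell,p})$ that the next KAN layer demands. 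The cross-$(q,p,i)$ linear combinations that turn these ingredients into $y_{\ell+1,q}$ are not performed inside $\Psi_\ell$; they are absorbed into $\omega_{\ell+1}$ of the next layer, and the outermost summation of the KAN is finally absorbed into the terminal affine layer $\Psi_{d-1}$ of the PowerMLP. This is the mechanism by which a depth-$d$ KAN, whose every layer is nonlinear, can be packed into a depth-$d$ PowerMLP whose terminal layer is linear. For the width, each of the at most $w\cdot w=w^2$ KAN activations per layer contributes at most $G+k$ shifted $\sigma_k$-components, so $\Psi_\ell$ needs at most $w^2(G+k)$ output units. For the parameter count, each KAN coefficient ($u_{\ell,q,p}$, $v_{\ell,q,p}$, or spline coefficient $c_j$) maps via the Lemma's closed form to a single entry of the triple $(\alpha_\ell,\omega_\ell,\gamma_\ell)$, so the budget $p$ is preserved.

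The main obstacle I foresee is making the basis function $b$ behave correctly across layers. Because $b$ is applied componentwise to $\mathbf{z}_\ell$ inside $\alpha_\ell b(\mathbf{z}_\ell)$, whenever the KAN's next layer needs $b(y_{\ell,p})$ one must arrange for $y_{\ell,p}$ to appear as an explicit component of $\mathbf{z}_\ell$, not merely as a linear combination of other components (over which $b$ does not distribute). Threading this ``hidden-value-reservation'' requirement through every layer while simultaneously respecting the depth $d$ and the width $w^2(G+k)$ is where I expect the bookkeeping to concentrate; the rest of the argument is a direct rewriting driven by Lemma~\ref{lem:bspline}.
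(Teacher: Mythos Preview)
Your plan is essentially the paper's proof: apply Lemma~\ref{lem:bspline} to flatten each $(k,G)$-spline into at most $G+k$ shifted $\sigma_k$ terms, pack these per edge into $(\omega_\ell,\gamma_\ell)$, set $\alpha_\ell=(u_{q,p})$ for the basis part, and absorb the outer coefficient matrix $\beta$ into the next layer's weight, which yields depth $d$, width $w^2(G+k)$, and the same nonzero-parameter count. The paper does not single out your ``hidden-value-reservation'' concern---it handles the basis term in one line by taking $\alpha=(u_{q,p})$ and then states that $\beta$ is absorbed into the next layer---so the bookkeeping you anticipate is exactly the step the paper compresses into that remark.
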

By Theorem \ref{th-k4p} and the fact that
a spline function is zero outside a certain interval (see Eq. \eqref{eq-bsp}) while PowerMLPs include all polynomials \cite{li2020better}, we have
\begin{cor}
\label{cor-k2p1}
 PowerMLPs define a strictly larger function space than KANs over $\mathbb{R}^n$.
\end{cor}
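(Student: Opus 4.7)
The plan is to split the claim into the set inclusion and its strictness, with Theorem \ref{th-k4p} already delivering the inclusion $\bigcup_{d,w,k,G,p} \mathcal{K}_{d,w,k,G,p} \subseteq \bigcup_{d,w,k,p} \mathcal{P}_{d,w,k,p}$. The substance of the remaining argument is to exhibit a concrete PowerMLP function that no KAN on all of $\mathbb{R}^n$ can reproduce.

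For the witness, I would take any polynomial of degree at least two, for instance $f(\mathbf{x}) = x_1^2$. By the quoted result of \cite{li2020better}, every polynomial lies in the PowerMLP function space, so $f$ is realized by some network in $\bigcup_{d,w,k,p} \mathcal{P}_{d,w,k,p}$. The strategy is then to rule out $f$ from the KAN function space via an asymptotic growth argument.

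The key step is to show that every fixed KAN has at most linear growth at infinity. Two ingredients suffice: (i) the basis function $b(x) = x/(1+e^{-x})$ satisfies $|b(x)| \leq |x|$ on all of $\mathbb{R}$ because the sigmoid factor lies in $(0,1)$; (ii) each spline function from Definition \ref{def-splinef} is a linear combination of compactly supported B-splines (zero outside the grid, by Eq. \eqref{eq-bsp}) and is therefore uniformly bounded on $\mathbb{R}$. Together these yield a per-edge estimate $|\phi_{\ell,q,p}(x)| \leq A_{\ell,q,p}|x| + C_{\ell,q,p}$, which I would lift by induction on the layer index to an envelope $\|\mathrm{KAN}(\mathbf{x})\|_\infty \leq A\|\mathbf{x}\|_\infty + B$ for constants $A, B$ depending only on the chosen network. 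Since $f(\mathbf{x}) = x_1^2$ is super-linear in $\|\mathbf{x}\|_\infty$, no KAN output can equal $f$ on all of $\mathbb{R}^n$, which gives the strict containment.

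The principal obstacle I anticipate is the bookkeeping in the inductive step: each KAN layer is a sum over inputs of univariate activations, so the affine-envelope form is preserved, but one must track how the constant term $B$ accumulates through the spline contributions and through $b$'s deviation from exact linearity. Once the per-edge bound is in hand, the induction and the final contradiction are routine.
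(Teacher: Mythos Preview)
Your proposal is correct and follows the same high-level route as the paper: inclusion via Theorem~\ref{th-k4p}, and strictness by exhibiting a polynomial (in the PowerMLP class via \cite{li2020better}) that no KAN on $\mathbb{R}^n$ can realize. The paper's own justification is a single line --- splines vanish outside a bounded interval (Eq.~\eqref{eq-bsp}) while PowerMLPs contain all polynomials --- so your linear-growth argument is in fact a more careful version of the same idea: you explicitly handle the basis function $b(x)=x/(1+e^{-x})$ via $|b(x)|\leq |x|$, which the paper's sketch glosses over, and then propagate the affine envelope through the layers.
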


\subsection{KAN can represent PowerMLP over Intervals}
\label{sec:powermlp2kan}

In this section, we prove the inclusion relationship in another direction.
A PowerMLP layer
\begin{equation}
\label{eq-player}
\mathbf{z}=\sigma_{k}(\omega\mathbf{x}+\gamma)+\alpha b(\mathbf{x})
\end{equation}
can be decomposed into $3$ operations: (1) an affine transformation: $\mathbf{y}=\omega\mathbf{x}+\gamma$;
(2) a ReLU-$k$ activation: $\mathbf{u}=\sigma_{k}(\mathbf{y})$; (3) an addition with basis function: $\mathbf{z}=\mathbf{u}+\alpha b(\mathbf{x})$.
By Lemmas \ref{lem:affine} and \ref{lem:repu}, operations (1) and (2) can be represented by spline functions,
while operation (3) can easily achieve.

\begin{lemma}[Affine Transformation]\label{lem:affine}
Consider an affine transformation on $\mathbb{R}$:
 $       \mathcal{A}(x)=\omega x+\gamma$.
For any $G$, we can find a $(k,G)$-grid
$t=(\underline{t_{-k},\cdots,t_{-1}},t_0,t_1,\cdots,t_G,$
$\underline{t_{G+1},\cdots,t_{G+k}}),$
    and a $k$-order spline function
\begin{equation*}
    \begin{aligned}
        &\mathrm{spline}_{k,G}(x)=\sum_{j=-k}^{G-1}c_jB_{j,k,t}(x),\\
    \end{aligned}
    \end{equation*}
where $c_j=\left(\sfrac{\sum_{i=j+1}^{j+k}t_i}{k}\right)\omega+\gamma,\,k>0$,
     such that $\mathcal{A}(x)=\mathrm{spline}_{k,G}(x)$ for $t_0\le x\le t_G$.
\end{lemma}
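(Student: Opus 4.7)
The plan is to reduce the lemma to two classical reproduction identities for B-splines: the partition of unity property $\sum_{j=-k}^{G-1} B_{j,k,t}(x) = 1$ on $[t_0, t_G]$, and the linear reproduction property $x = \sum_{j=-k}^{G-1} \xi_{j,k} B_{j,k,t}(x)$ on $[t_0,t_G]$, where $\xi_{j,k} = \frac{1}{k}\sum_{i=j+1}^{j+k} t_i$ is the Greville abscissa (a.k.a. knot average) associated with $B_{j,k,t}$. These are standard consequences of Marsden's identity. Once both are in hand, the linear combination $\omega x + \gamma$ is immediately reproduced by the coefficients $c_j = \omega \xi_{j,k} + \gamma$, which is exactly the claimed formula.

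Concretely, I would proceed as follows. First, I would state (and, if needed, prove by an induction on $k$ using the recursion in Eq.~\eqref{eq:recur}) the partition of unity identity. The base case $k=0$ is immediate from Eq.~\eqref{eq-bsp}, since on $[t_0,t_G]$ exactly one $B_{j,0,t}$ equals $1$ at each point. For the inductive step, one substitutes the recursion into $\sum_j B_{j,k,t}(x)$, regroups the two sums over a common index, and the coefficients telescope to yield $\sum_j B_{j,k-1,t}(x) = 1$. Second, I would establish Marsden's identity $(y-x)^k = \sum_j \prod_{i=1}^{k}(y - t_{j+i})\, B_{j,k,t}(x)$ for $x \in [t_0, t_G]$ and any $y \in \mathbb{R}$, again by induction using Eq.~\eqref{eq:recur}. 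Differentiating both sides in $y$ once and setting $y=x$ gives the linear reproduction identity: expanding $\prod_{i=1}^{k}(y-t_{j+i})$ and extracting the coefficient of $y^{k-1}$ yields $k\,x^{k-1} = \sum_j k\, \xi_{j,k}\, x^{k-1} \cdot (\text{something})$; a cleaner route is to compare the coefficients of $y^{k}$ and $y^{k-1}$ in Marsden's identity, which directly produces $1 = \sum_j B_{j,k,t}(x)$ and $x = \sum_j \xi_{j,k} B_{j,k,t}(x)$.

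Third, with these two identities in place, I would simply compute
\[
\mathrm{spline}_{k,G}(x) = \sum_{j=-k}^{G-1} (\omega \xi_{j,k} + \gamma) B_{j,k,t}(x) = \omega \sum_j \xi_{j,k} B_{j,k,t}(x) + \gamma \sum_j B_{j,k,t}(x) = \omega x + \gamma,
\]
for all $x \in [t_0, t_G]$, which is exactly $\mathcal{A}(x)$. The chosen $(k,G)$-grid can be any increasing sequence (for instance, uniform knots on a sufficiently large interval containing the region of interest), since the identities hold on $[t_0, t_G]$ irrespective of the specific placement of the interior knots.

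The main obstacle is not the final algebra but the derivation of the two reproduction identities, particularly linear reproduction, from the recursive definition used in the paper. I expect to either prove Marsden's identity cleanly by induction and then read off both identities as the top two coefficients in $y$, or to prove each identity directly by induction on $k$, being careful that the paper's normalization of the zero-order B-spline (Eq.~\eqref{eq-bsp}) matches the usual one on $[t_0, t_G]$. The hypothesis that $t_u \neq t_v$ for $u \neq v$ is not needed here (unlike in Lemma~\ref{lem:bspline}); the argument only requires a strictly increasing $(k,G)$-grid, which is part of the definition.
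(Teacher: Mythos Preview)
Your proposal is correct. Both you and the paper ultimately rely on induction on $k$ via the recursion in Eq.~\eqref{eq:recur} together with the partition of unity $\sum_j B_{j,k,t}(x)=1$, so the underlying mechanics are close. The organizational choice, however, is different: the paper runs a single induction directly on the full affine statement $\sum_j c_j B_{j,k,t}(x)=\omega x+\gamma$ (verifying $k=1$ on each subinterval, then substituting the recursion, regrouping, and invoking the inductive hypothesis plus partition of unity), whereas you factor the claim into the two classical reproduction identities---partition of unity and linear reproduction $x=\sum_j \xi_{j,k}B_{j,k,t}(x)$ with Greville abscissae $\xi_{j,k}$---and then combine them linearly. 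Your route is more modular and ties the lemma to the standard Marsden-identity toolkit, which makes clear that the coefficients $c_j$ are exactly $\omega\,\xi_{j,k}+\gamma$; the paper's route is slightly more economical in that it avoids stating and proving Marsden's identity as a separate step, at the cost of still needing to cite partition of unity from \cite{deBoor1978practical}. Either way the proof goes through without the distinct-knot hypothesis, as you note.
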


\begin{lemma}[ReLU-$k$ Function]\label{lem:repu}
We can find a $(k,2)$-grid $t=(\underline{t_{-k},\cdots,t_{-1}},t_0,0,t_2,\underline{t_3,\cdots,t_{k+2}})$ and a $k$-order spline function defined on $t$
    \begin{equation*}
        \mathrm{spline}_{k,2}(x)=\sum_{j=-k}^{1}\left[\left(\prod_{l=j+1}^{j+k}\sigma_1(t_l)\right) B_{j,k,t}(x)\right],
    \end{equation*}
such that $\sigma_{k}(x)=\mathrm{spline}_{k,2}(x)$ for $t_0\le x\le  t_2$.
\end{lemma}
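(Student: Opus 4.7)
The plan is to verify directly that the candidate spline collapses to $\sigma_k$ on $[t_0, t_2]$ by first pruning the vanishing coefficients and then evaluating the sole surviving B-spline on its initial support segment. First I would analyze the coefficient formula $c_j = \prod_{l=j+1}^{j+k}\sigma_1(t_l)$. Because the grid is strictly increasing with $t_1 = 0$, we have $t_l < 0$ for $l \le 0$, $t_1 = 0$, and $t_l > 0$ for $l \ge 2$, so $\sigma_1(t_l) = \max(0, t_l)$ vanishes exactly when $l \le 1$ and equals $t_l$ for $l \ge 2$. For every $j \in \{-k, \ldots, 0\}$ the index range $\{j+1,\ldots, j+k\}$ contains at least one index $\le 1$, so $c_j = 0$. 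The only surviving coefficient is $c_1 = \prod_{l=2}^{k+1} t_l > 0$, and the entire spline collapses to $c_1 B_{1,k,t}(x)$.

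Next I would establish a closed form for a B-spline on the leftmost interval of its support: by induction on $k$ using the de Boor--Cox recursion, together with the fact that $B_{j+1, k-1, t}(x) = 0$ on $[t_j, t_{j+1})$, one obtains
\begin{equation*}
    B_{j,k,t}(x) = \frac{(x - t_j)^k}{\prod_{l=j+1}^{j+k}(t_l - t_j)}, \qquad x \in [t_j, t_{j+1}).
\end{equation*}
Specializing to $j = 1$ and $t_1 = 0$ yields $B_{1,k,t}(x) = x^k / \prod_{l=2}^{k+1} t_l$ on $[0, t_2)$, and this extends by continuity of B-splines to the closed interval $[0, t_2]$.

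The two sub-intervals of $[t_0, t_2]$ are then checked directly. On $[0, t_2]$ the candidate spline equals $c_1 B_{1,k,t}(x) = x^k$ after the factor $\prod_{l=2}^{k+1} t_l$ cancels, matching $\sigma_k(x) = x^k$. On $[t_0, 0]$ the B-spline $B_{1,k,t}$ already vanishes since its support is $[t_1, t_{k+2}] = [0, t_{k+2}]$, and the remaining coefficients are zero, so the spline equals $0 = \sigma_k(x)$. Combining the two pieces gives $\sigma_k \equiv \mathrm{spline}_{k,2}$ on $[t_0, t_2]$.

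The only nontrivial ingredient is the closed-form evaluation of $B_{j,k,t}$ on its first support interval; once that short inductive step is in hand, the rest of the argument is bookkeeping driven by the sign pattern of the knots relative to $0$, so I expect no further technical obstacles.
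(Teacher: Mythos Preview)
Your proof is correct and shares the coefficient-pruning step with the paper's argument, but the evaluation of the surviving term $c_1 B_{1,k,t}(x)$ proceeds differently. The paper invokes Lemma~\ref{lem:bspline} to write $B_{1,k,t}$ as a linear combination of shifted $\sigma_k$ functions, then observes that on $[t_0,t_2]$ all terms $\sigma_k(x-t_i)$ with $i\ge 2$ vanish, leaving only the $i=1$ term $\tfrac{t_{k+2}-t_1}{\prod_{l=2}^{k+2}(t_l-t_1)}\sigma_k(x-t_1)$; since $t_1=0$ this simplifies directly to $\sigma_k(x)/\prod_{l=2}^{k+1}t_l$ and the prefactor cancels. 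You instead establish the closed form of $B_{j,k,t}$ on its leftmost support interval by a short de~Boor--Cox induction and then treat $[t_0,0]$ and $[0,t_2]$ separately. Your route is self-contained and does not rely on the earlier representation lemma, which is a mild advantage; the paper's route is slightly more economical because the single expression $\sigma_k(x-t_1)=\sigma_k(x)$ already handles both sub-intervals at once without a case split.
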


\begin{figure}[t]
    \centering
    \small
    \begin{tikzpicture}
        \node (layer1-1) at (-0.2,0.7) [circle, draw, fill=white, minimum size=0.8cm, inner sep=0pt] {$x_1$};
        \node (layer1-2) at (-0.2,-0.1) [circle, draw, fill=white, minimum size=0.8cm, inner sep=0pt] {$x_2$};
        \node (layer1-3) at (-0.2,-0.7) [] {$\vdots$};
        \node (layer1-4) at (-0.2,-1.6) [circle, draw, fill=white, minimum size=0.8cm, inner sep=0pt] {$x_n$};

        \node (layer2-1) at (3.5,1.6) [circle, draw, fill=white, minimum size=0.8cm, inner sep=0pt] {$y_1$};
        \node (layer2-2) at (3.5,1.0) [] {$\vdots$};
        \node (layer2-3) at (3.5,0.1) [circle, draw, fill=white, minimum size=0.8cm, inner sep=0pt] {\scriptsize $y_m$};
        \node (layer2-4) at (3.5,-0.9) [circle, draw, fill=white, minimum size=0.8cm, inner sep=0pt] {\scriptsize $y_{m+1}$};
        \node (layer2-5) at (3.5,-1.7) [circle, draw, fill=white, minimum size=0.8cm, inner sep=0pt] {\scriptsize $y_{m+2}$};
        \node (layer2-6) at (3.5,-2.3) [] {$\vdots$};
        \node (layer2-7) at (3.5,-3.1) [circle, draw, fill=white, minimum size=0.8cm, inner sep=0pt] {\scriptsize $y_{m+n}$};

        \node (layer3-1) at (7.2,1) [circle, draw, fill=white, minimum size=1cm, inner sep=0pt] {$z_1$};
        \node (layer3-2) at (7.2,-0.4) [] {$\vdots$};
        \node (layer3-3) at (7.2,-2) [circle, draw, fill=white, minimum size=1cm, inner sep=0pt] {$z_m$};

        \draw [thick,color=blue,->] (layer1-1) -- node[above, midway, sloped, yshift=0.3cm] {\scriptsize $\phi_{1,q,p}(x_p)=\omega_{q,p}x_p+\gamma_{q,p}$} (layer2-1);
        \draw [thick,color=blue,->] (layer1-2) -- node[above, yshift=-0.4cm, xshift=-0.3cm, draw=none,fill=none] {} (layer2-1);
        \draw [thick,color=blue,->] (layer1-4) -- node[above, yshift=-0.5cm, xshift=0.5cm, draw=none,fill=none] {} (layer2-1);
        \draw [thick,color=blue,->] (layer1-1) -- node[above, xshift=0.5cm, draw=none,fill=none] {} (layer2-3);
        \draw [thick,color=blue,->] (layer1-2) -- node[above, yshift=-0.5cm, xshift=-0.3cm, draw=none,fill=none] {} (layer2-3);
        \draw [thick,color=blue,->] (layer1-4) -- node[above, yshift=-0.6cm, xshift=0cm, draw=none,fill=none] {} (layer2-3);
        \draw [thick,color=red,->] (layer1-1) -- node[above, xshift=0cm, yshift=0cm, draw=none,fill=none] {} (layer2-4);
        \draw [thick,color=red,->] (layer1-2) -- node[above, xshift=0cm, yshift=0cm, draw=none,fill=none] {} (layer2-5);
        \draw [thick,color=red,->] (layer1-4) -- node[below, xshift=0cm, yshift=-0cm, midway, sloped] {\scriptsize \makecell{$\phi_{1,q,p}(x_p)=\delta_{q-m,p}x_p$}} (layer2-7);

        \draw [thick,color=blue!70!black,->] (layer2-1) -- node[above, draw=none,fill=none,midway,sloped,yshift=0.1cm] {\scriptsize \makecell{$\phi_{2,r,q}(y_q)=\delta_{r,q}\sigma_k(y_q)$}} (layer3-1);
        \draw [thick,color=blue!70!black,->] (layer2-3) -- node[above, draw=none,fill=none] {} (layer3-3);
        \draw [thick,color=red!70!black,->] (layer2-4) -- node[above, draw=none,fill=none] {} (layer3-1);
        \draw [thick,color=red!70!black,->] (layer2-4) -- node[above, draw=none,fill=none] {} (layer3-3);
        \draw [thick,color=red!70!black,->] (layer2-5) -- node[above, draw=none,fill=none] {} (layer3-1);
        \draw [thick,color=red!70!black,->] (layer2-5) -- node[above, draw=none,fill=none] {} (layer3-3);
        \draw [thick,color=red!70!black,->] (layer2-7) -- node[above, draw=none,fill=none] {} (layer3-1);
        \draw [thick,color=red!70!black,->] (layer2-7) -- node[below, draw=none,fill=none,midway,sloped,yshift=-0.1cm] {\scriptsize \makecell{$\phi_{2,r,q}(y_q)=\alpha_{r,q-m}b(y_q)$}} (layer3-3);
    \end{tikzpicture}
    \captionof{figure}{Represent a PowerMLP layer with a 2-layer KAN. $\delta_{ij}$ equals to $1$ if $i=j$ and $0$ otherwise.
    The first layer represents the affine transformation $y_q=\sum_{p=1}^n\omega_{q,p}x_p+\gamma_{q,p}$ for $1\leq q\leq m$ and keeps $y_q=x_{q-m}$ for $m+1\leq q\leq m+n$. The second layer represents the ReLU-$k$ activation and adds the basis function: $z_r=\sigma_k(y_r)+\sum_{q=m+1}^{m+n}\alpha_{r,q-m}b(y_q)$.}
    \label{fig:2layerkan}
\end{figure}
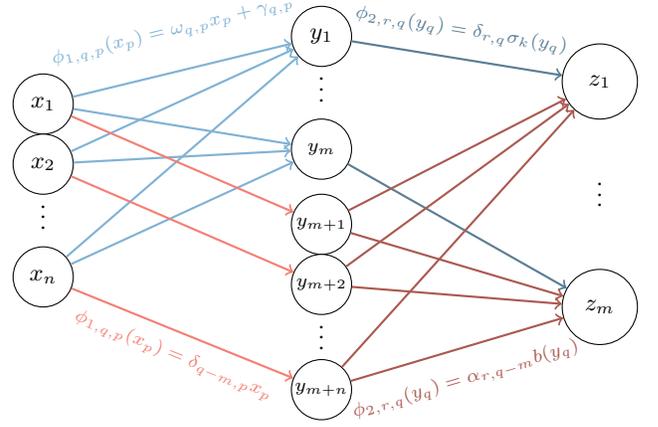

By the above lemmas, we can use two layers of spline functions to represent operations (1) and (2)  successively, as shown in Figure \ref{fig:2layerkan}. For the first layer, we set the coefficients before the basis function to be $0$. By Lemma \ref{lem:affine}, we can represent affine transformation $\mathcal{A}(x)=\omega x+\gamma$ by activation functions $\phi_{1,q,p}(x_p)$ (see Eq. \eqref{eq:kan_activation}).
Then, for $1\leq p \leq m$, we take the activation function as follows:
\begin{align*}
    \phi_{1,q,p}(x_p)=
    \left\{\begin{aligned}
        &\omega_{q,p}x_p+\gamma_{q,p},&\mathrm{for\,}1\leq q\leq m,\\
        &\delta_{q-m,p}x_p,&\mathrm{for\,}1\leq q-m\leq n,\\
    \end{aligned}\right.
\end{align*}
where $\delta_{ij}$ equals $1$ if $i=j$ and $0$ otherwise. Thus, we have $y_q=\sum_{p=1}^n\omega_{q,p}x_p+\gamma_{q,p}$ for $1\leq q\leq m$ and $y_q=x_{q-m}$ for $m+1\leq q\leq m+n$.
In the second layer, we represent the addition of ReLU-$k$ activation and the basis function. By Lemma \ref{lem:repu}, spline function can represent  $\sigma_k$. So for $1\leq r\leq m$, we set $\phi_{2,r,q}(y_q)$ as follows:
\begin{align*}
    \phi_{2,r,q}(y_q)=
\left\{\begin{aligned}
        &\delta_{r,q}\sigma_k(y_q),&\mathrm{for\,}1\leq q\leq m,\\
        &\alpha_{r,q-m}b(y_q),&\mathrm{for\,}1\leq q-m\leq n.\\
    \end{aligned}\right.
\end{align*}

By direct computation, we show that the output of the two-layer KAN is the same as our PowerMLP layer in Eq. \eqref{eq-player}.
Hence, we derive Theorem \ref{th-p2k} and Corollary \ref{cor-p2k}. Then by \cite[Theorem 3.3]{li2020better}, we obtain a general approximation result for KAN in Corollary \ref{cor-app}.

\begin{theorem}[PowerMLP is a subset of KAN over interval]
\label{th-p2k}
Use notations in Theorem \ref{th-k4p}.
    For any $E\in\mathbb{R}_+$, it holds
    \begin{equation}
    \label{eq-P2K}
    \mathcal{P}_{d,w,k,p}\subset 
    {\mathcal{K}}_{2d,2w,k,2,\mathcal{O}(kp)}
    \mathrm{\,over\,} [-E,E]^n.
    \end{equation}
\end{theorem}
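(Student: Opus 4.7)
The plan is to simulate each PowerMLP layer by two consecutive KAN layers using exactly the construction sketched in Figure \ref{fig:2layerkan}, and then compose these blocks. Given a PowerMLP
$\mathrm{PowerMLP}=\Psi_{L-1}\circ\cdots\circ\Psi_0$
with $\Psi_\ell(\mathbf{x})=\sigma_k(\omega_\ell\mathbf{x}+\gamma_\ell)+\alpha_\ell b(\mathbf{x})$, I would first treat a single layer in isolation. Doubling the width lets me carry an auxiliary copy of the layer's input through the first KAN layer: the first KAN layer uses Lemma \ref{lem:affine} to realize each affine pre-activation coordinate $y_q=(\omega_\ell\mathbf{x})_q+(\gamma_\ell)_q$ as a spline on an appropriate $(k,G)$-grid (specialized here to $G=2$), and uses the trivial spline identity on the auxiliary slots so that $y_{m+p}=x_p$. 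The second KAN layer then uses Lemma \ref{lem:repu} on the first $m$ coordinates to realize $\sigma_k(y_q)$ as a $(k,2)$-spline, and uses the activations $\phi_{2,r,q}(y_q)=\alpha_{r,q-m}b(y_q)$ on the auxiliary coordinates, which are allowed because $b$ is exactly the basis function common to both architectures. Summing the two spline outputs at each node of the second layer recovers $\sigma_k(\omega_\ell\mathbf{x}+\gamma_\ell)+\alpha_\ell b(\mathbf{x})$.

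Having established this one-layer identity, I would concatenate $d$ copies, producing a KAN of depth $2d$ and width at most $2w$. The architectural parameters ($k=k$, $G=2$) are preserved at every layer, so the result lies in some class $\mathcal{K}_{2d,2w,k,2,\cdot}$. For the parameter count, each $(k,2)$-spline activation function in a KAN layer contributes $O(k)$ coefficients (plus the $u,v$ of the residual activation in Eq. \eqref{eq:kan_activation}), and I would note that each nonzero parameter of the PowerMLP layer gives rise to at most $O(k)$ nonzero spline coefficients in the two corresponding KAN layers, while the ``identity'' and ``delta'' activations used to shuttle the input copies forward contribute $O(k)$ parameters per auxiliary node but only $O(1)$ per PowerMLP input, already accounted for in $p$. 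Aggregating gives the stated $O(kp)$ bound.

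The main obstacle, and the reason the statement is restricted to $[-E,E]^n$, is that Lemmas \ref{lem:affine} and \ref{lem:repu} yield equality between the spline and the target only on the grid's core interval $[t_0,t_G]$, whereas splines differ from polynomials (and from $\sigma_k$) outside their support. To handle this I would argue inductively: assuming the $\ell$-th layer of the PowerMLP receives inputs in some compact box $[-E_\ell,E_\ell]^{m_\ell}$, the continuity of $\omega_\ell\mathbf{x}+\gamma_\ell$, $\sigma_k$, and $b$ produce a compact box $[-E_{\ell+1},E_{\ell+1}]^{m_{\ell+1}}$ containing the layer's output, with the auxiliary copy also contained in $[-E_\ell,E_\ell]$. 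For each activation in the two simulating KAN layers I would then select a $(k,2)$-grid whose core interval $[t_0,t_2]$ covers the relevant pre-activation range (for the affine part, the image of the current box; for the ReLU-$k$ part, the image of the affine step). Because the inductive boxes depend on the PowerMLP's fixed weights and on $E$ only, finitely many such grids suffice, and layerwise exactness on the core intervals propagates to exactness of the full composition on $[-E,E]^n$, completing the proof.
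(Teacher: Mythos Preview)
Your proposal is correct and follows essentially the same approach as the paper: you use the two-layer KAN block of Figure~\ref{fig:2layerkan}, invoke Lemmas~\ref{lem:affine} and~\ref{lem:repu} with $G=2$ to realize the affine and $\sigma_k$ steps, carry a copy of the input through to supply the basis-function term, and propagate boundedness layer by layer to justify choosing grids whose core intervals cover the relevant ranges. The paper's own proof is organized identically, with the parameter count worked out explicitly as $(k+4)mn+(k+3)(m+n)$ per PowerMLP layer before concluding $\mathcal{O}(kp)$.
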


\begin{cor}
\label{cor-p2k}
PowerMLPs and KANs define the same function space over $[-E,E]^n$ for any $E\in\mathbb{R}_+$.
\end{cor}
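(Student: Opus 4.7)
The plan is to derive this corollary as an immediate combination of Theorems \ref{th-k4p} and \ref{th-p2k}, which together supply the two containment directions required for equality of function spaces on $[-E,E]^n$. Concretely, for fixed activation order $k$ and input dimension $n$, denote by $\mathrm{Fun}(\mathcal{P})$ the set of functions on $[-E,E]^n$ realized by restricting some PowerMLP in $\bigcup_{d,w,p}\mathcal{P}_{d,w,k,p}$, and similarly define $\mathrm{Fun}(\mathcal{K})$ for KANs over $\bigcup_{d,w,G,p}\mathcal{K}_{d,w,k,G,p}$. The goal is to show $\mathrm{Fun}(\mathcal{P})=\mathrm{Fun}(\mathcal{K})$.

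For the inclusion $\mathrm{Fun}(\mathcal{K})\subseteq\mathrm{Fun}(\mathcal{P})$, I would invoke Theorem \ref{th-k4p}, which provides the stronger global statement $\mathcal{K}_{d,w,k,G,p}\subseteq\mathcal{P}_{d,w^2(G+k),k,p}$ on all of $\mathbb{R}^n$; restricting to $[-E,E]^n$ is immediate. For the reverse inclusion $\mathrm{Fun}(\mathcal{P})\subseteq\mathrm{Fun}(\mathcal{K})$, I would apply Theorem \ref{th-p2k} directly: any $f\in\mathcal{P}_{d,w,k,p}$ agrees on $[-E,E]^n$ with an element of $\mathcal{K}_{2d,2w,k,2,\mathcal{O}(kp)}$. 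Taking unions over all depth, width, and parameter budgets on each side yields the two-sided containment, hence the equality.

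The one point worth flagging is the asymmetry of the two inclusions that makes this corollary necessarily local. The KAN-to-PowerMLP direction is a global identity on $\mathbb{R}^n$, whereas the PowerMLP-to-KAN direction holds only over the bounded cube, because a spline function vanishes outside its knot range while a PowerMLP can represent unbounded polynomials. This asymmetry is precisely what Corollary \ref{cor-k2p1} records on the global side, and it explains why the statement of Corollary \ref{cor-p2k} must be phrased over $[-E,E]^n$ rather than over $\mathbb{R}^n$. No genuinely hard step remains: once Theorems \ref{th-k4p} and \ref{th-p2k} are in hand, the corollary is a one-line consequence, so the main ``obstacle'' is simply bookkeeping the quantifiers over $(d,w,G,p)$ carefully so that the parameter-budget inflations on both sides are absorbed into the union defining each function space.
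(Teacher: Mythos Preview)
Your proposal is correct and matches the paper's approach: the corollary is stated immediately after Theorem~\ref{th-p2k} as a direct consequence of combining the two inclusions from Theorems~\ref{th-k4p} and~\ref{th-p2k}, with no further argument supplied. Your care in taking unions over $(d,w,G,p)$ and noting the local-vs-global asymmetry is appropriate and slightly more explicit than what the paper spells out.
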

\begin{cor}
\label{cor-app}
Let $f$ be a continuous, first-order differentiable function on $[-1,1]^n$, which satisfies $\sum_{i=1}^n\int_{[-1,1]^n}(\partial_{x_i}f(x))^2dx\leq 1$. Then, for any $\epsilon\in(0,1)$, a $2$-order KAN $K$ requires at most $\mathcal{O}(n\log_2\frac{1}{\epsilon})$ layers and $\mathcal{O}(\epsilon^{-n})$ nonzero parameters to ensure
$\lVert K-f\rVert_{L_2}\leq\epsilon.$
\end{cor}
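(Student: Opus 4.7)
The plan is to chain together an existing approximation result for ReLU-$k$ networks with the explicit PowerMLP-to-KAN embedding we have just proved. Concretely, I would let $k=2$ throughout and first invoke \cite[Theorem 3.3]{li2020better}, which asserts that any $f$ satisfying the hypothesis $\sum_i\int_{[-1,1]^n}(\partial_{x_i}f)^2\le 1$ can be approximated in $L_2$ on $[-1,1]^n$ to accuracy $\epsilon$ by a ReLU-$2$ MLP $M$ whose depth is $\mathcal{O}(n\log_2(1/\epsilon))$ and whose total number of nonzero parameters is $\mathcal{O}(\epsilon^{-n})$. Call this depth $d$, width $w$, and parameter count $p$.

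Next, I would observe that a ReLU-$k$ MLP is the special case of a PowerMLP in which every basis coefficient $\alpha_\ell$ is zero; hence $M\in\mathcal{P}_{d,w,2,p}$. Applying Theorem \ref{th-p2k} with $E=1$ and $k=2$ then converts $M$ into an element of $\mathcal{K}_{2d,2w,2,2,\mathcal{O}(2p)}$ that agrees with $M$ pointwise on $[-1,1]^n$, and therefore coincides with $M$ in the $L_2([-1,1]^n)$ norm as well. Consequently the resulting $2$-order KAN $K$ satisfies $\lVert K-f\rVert_{L_2}=\lVert M-f\rVert_{L_2}\le\epsilon$.

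Finally, I would check that the asymptotics survive the embedding: the depth becomes $2d=\mathcal{O}(n\log_2(1/\epsilon))$, and the parameter count becomes $\mathcal{O}(kp)=\mathcal{O}(\epsilon^{-n})$, since $k=2$ is a fixed constant. This matches the claimed bounds exactly, completing the argument.

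I expect the only nontrivial point to be verifying that \cite[Theorem 3.3]{li2020better} is applicable with the precise Sobolev-type hypothesis stated here (the integrability condition on first derivatives on $[-1,1]^n$) and that it indeed delivers a ReLU-$2$ network rather than, say, a ReLU-$k$ network for some other $k$; both are true for the cited theorem. Beyond that, the two bookkeeping steps — recognizing a ReLU-$k$ MLP as a degenerate PowerMLP, and transcribing the $(d,w,p)\mapsto(2d,2w,\mathcal{O}(kp))$ bounds from Theorem \ref{th-p2k} — are purely mechanical.
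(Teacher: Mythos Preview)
Your proposal is correct and follows exactly the route the paper indicates: invoke \cite[Theorem 3.3]{li2020better} for a ReLU-$2$ network with the stated depth and parameter bounds, view it as a PowerMLP with vanishing basis coefficients, and then apply Theorem~\ref{th-p2k} over $[-1,1]^n$ to obtain a $2$-order KAN with depth $2d=\mathcal{O}(n\log_2(1/\epsilon))$ and $\mathcal{O}(kp)=\mathcal{O}(\epsilon^{-n})$ parameters. The paper itself only states that the corollary follows from these two ingredients without spelling out the bookkeeping, so your write-up is in fact more detailed than what appears there.
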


\subsection{FLOPs: Comparison of Computing Cost}
\label{sec-FLOP}
In this section, we show that PowerMLP exhibits significantly faster training and inference speeds compared to KAN in terms of FLOPs metric.
FLOPs, an acronym for \textit{Floating Point Operations}, is a metric used to quantify the computational complexity of neural networks, especially in frameworks like PyTorch~\cite{molchanov2017pruning}. It measures the number of floating-point operations required to perform one forward pass through the network.

Following \citet{yu2024kanmlpfairercomparison},
we consider FLOPs for any arithmetic operations like $+,-,\times,\div$ to be $1$,
and for Boolean operations to be $0$.
Meanwhile, any operation of comparing two numbers is set to be $0$ FLOPs, which means that the FLOPs of ReLU function are $0$. We denote FLOPs of basis function as $\lambda$. Then we can calculate FLOPs of one layer of MLP (with ReLU), KAN and PowerMLP below:
{\small
\begin{align*}
    &\mathcal{F}_\mathrm{MLP}=2 d_{in} d_{out},\\
    &\mathcal{F}_\mathrm{KAN}=d_{in}d_{out}(9kG+13.5k^2+2G-2.5k+3)+\lambda d_{in},\\
    &\mathcal{F}_\mathrm{PowerMLP}=4 d_{in} d_{out}+(k-1) d_{out}+\lambda d_{in}
\end{align*}}
\hskip-3pt
where $d_{in}$ and $d_{out}$ denote the input and output dimensions of the layer. The KAN layer uses $(k,G)$-grid, and the PowerMLP layer is $k$-order. Details refer to Appendix A.

Then we compare FLOPs of three network layers with same number of parameters, given by the ratio of FLOPs $\mathcal{F}$ to numbers of parameters $\mathcal{N}$:
{\small
\begin{align*}
    &r_\mathrm{MLP}=\frac{2 d_{in} d_{out}}{d_{in} d_{out}+d_{out}},\\
    &r_\mathrm{KAN}=\frac{d_{in}d_{out}(9kG+13.5k^2+2G-2.5k+3)+\lambda d_{in}}{d_{in}d_{out}(k+G+2)},\\
    &r_\mathrm{PowerMLP}=\frac{4 d_{in} d_{out}+(k-1) d_{out}+\lambda d_{in}}{2d_{in} d_{out}+d_{out}}.
\end{align*}}

Assuming that $d_{in}$ and $d_{out}$ increase at the same rate, $r_\mathrm{MLP}$ and $r_\mathrm{PowerMLP}$ tend towards values less than $2$, while $r_\mathrm{KAN}$ approaches to  numbers larger than $20$ for $k\geq3,G\geq3$ ($k=3,G=3$ are the smallest values used in KAN paper).
{\bf Thus, PowerMLP shares a close computing speed with MLP, and is over 10 times faster than KAN under the FLOPs metric.}
We give an example of FLOPs of KAN, MLP and PowerMLP with almost the same number of parameters in Table \ref{tab:flops}. Training speed comparisons in experimental settings are in Section \ref{sec:speed}.
\begin{table}[ht]
    \small
    \centering
    \begin{tabular}{c c c c}
        \hline
        Network & Shape & \#Params & FLOPs \\
        \hline
        KAN ($G=3,k=3$) & $[2,1,1]$ & $24$ & $564$ \\
        MLP (ReLU) & $[2,6,1]$ & $25$ & $36$ \\
        PowerMLP ($k=3$) & $[2,4,1]$ & $25$ & $40$ \\
        \hline
    \end{tabular}
    \captionof{table}{Comparison of KAN, MLP and PowerMLP. With almost the same parameters, MLP and PowerMLP have much fewer FLOPs than KAN.
    }
    \label{tab:flops}
\end{table}

\section{Experiments}
\label{sec-exp}
In Section \ref{sec-theory}, PowerMLPs are shown to define the same function space as KANs over bounded intervals and achieve faster computation.
In this section, we employ several experiments to validate these theoretical findings and demonstrate the advantages of PowerMLP.

Four experiments are conducted.
(1) We considered AI for science tasks in the KAN paper~\cite{liu2024kan}  under the same settings, showing that PowerMLP performs better.
(2) More complex tasks like machine learning, natural language processing, and image classification are considered. We show that PowerMLP outperforms KAN in all tasks.
(3) We compared training time and convergence time of KAN and PowerMLP, validating that PowerMLP can be much faster.
(4) We conducted an ablation experiment to show that both the basis function and ReLU-$k$ activation are needed for the performance of PowerMLP.

All KANs in the paper are the latest version (0.2.5) up to 2024-8-14.
More details are given in Appendix B.\footnote{Code is available at https://github.com/Iri-sated/PowerMLP.}

\subsection{AI for Science Tasks}
\label{sec:ai4sci}
\subsubsection{Function Fitting}
PowerMLP was tested on a regression task for $16$ special functions from KAN's experiments~\cite{liu2024kan}. For KAN, MLP, and PowerMLP, we choose two sizes for the networks:
\textbf{(1) Small size: } KAN with $k=3,G=3$, shape $[2,1,1]$, and $24$ parameters; MLP with shape $[2,6,1]$ and $25$ parameters; $3$-order PowerMLP with shape $[2,4,1]$ and $25$ parameters;
\textbf{(2) Large size:} KAN with $k=3,G=100$, shape $[2,2,1,1]$ and $735$ parameters; MLP with shape $[2,32,18,1]$ and $709$ parameters; $3$-order PowerMLP with shape $[2,32,8,1]$ and $689$ parameters.

Results are given in Table \ref{tab:specialfunc}.
We see that PowerMLP achieves the best results on 11/10 (small size/large size) out of 16 cases. This is attributed to that PowerMLP has ReLU-$k$ for stronger power of expression than MLP and can better converge than KAN with a simpler structure of non-parameter activations.

\begin{table*}[ht]
    \small
    \centering
    \begin{tabular}{c | c c c | c c c}
        \hline
        \multirow{2}{*}{\makecell{Function\\Name}} & \multicolumn{3}{c|}{Small Size: $\sim25$ parameters} & \multicolumn{3}{c}{Large Size: $\sim700$ parameters}\\
         & KAN & MLP & PowerMLP & KAN & MLP & PowerMLP \\
        \hline
        { JE} & $4.63\times10^{-3}$ & $4.98\times 10^{-3}$ & $\mathbf{5.79\times 10^{-4}}$ & $1.04\times10^{-4}$ & $5.88\times 10^{-4}$ & $\mathbf{7.23\times 10^{-5}}$ \\
        { IE1} & $1.34\times10^{-2}$ & $5.79\times 10^{-3}$ & $\mathbf{3.43\times 10^{-3}}$ & ${4.52\times10^{-5}}$ & $5.57\times 10^{-4}$ & $\mathbf{3.37\times 10^{-5}}$ \\
        { IE2} & $1.16\times10^{-2}$ & $4.71\times 10^{-3}$ & $\mathbf{1.73\times 10^{-3}}$ & $1.18\times10^{-3}$ & $5.98\times 10^{-4}$ & $\mathbf{3.20\times 10^{-5}}$ \\
        { B1} & $7.71\times10^{-1}$ & $3.94\times 10^{-2}$ & $\mathbf{3.93\times 10^{-2}}$ & $1.70\times10^{-2}$ & $5.47\times 10^{-3}$ & $\mathbf{2.77\times 10^{-3}}$ \\
        { B2} & $7.94\times10^{-2}$ & $\mathbf{6.02\times 10^{-2}}$ & $7.75\times 10^{-2}$ & $\mathbf{1.77\times10^{-3}}$ & $4.62\times 10^{-3}$ & $2.61\times 10^{-3}$ \\
        { MB1} & $2.29\times10^{0}$ & ${3.76\times10^{-2}}$ & $\mathbf{3.48\times 10^{-2}}$ & $1.70\times10^{-2}$ & $5.11\times10^{-3}$ & $\mathbf{3.75\times 10^{-3}}$ \\
        { MB2} & $7.97\times10^{-1}$ & $1.04\times10^{-2}$ & $\mathbf{7.47\times 10^{-3}}$ & ${9.44\times10^{-5}}$ & $1.01\times10^{-3}$ & $\mathbf{7.42\times 10^{-5}}$ \\
        { AL $(m = 0)$} & $1.09\times10^{-1}$ & $8.14\times10^{-2}$ & $\mathbf{7.21\times 10^{-2}}$ & $\mathbf{1.88\times10^{-3}}$ & $7.45\times10^{-3}$ & $6.72\times 10^{-3}$ \\
        { AL $(m = 1)$} & $1.25\times10^{-1}$ & $7.49\times10^{-2}$ & $\mathbf{6.94\times 10^{-2}}$ & $1.40\times10^{-2}$ & $1.20\times10^{-2}$ & $\mathbf{1.02\times 10^{-2}}$ \\
        { AL $(m = 2)$} & $2.30\times10^{-1}$ & $1.10\times10^{-1}$ & $\mathbf{9.95\times 10^{-2}}$ & $\mathbf{1.67\times10^{-3}}$ & $8.83\times10^{-3}$ & $7.29\times 10^{-3}$ \\
        { SH $(m = 0, n = 1)$} & ${4.05\times10^{-5}}$ & $2.02\times10^{-3}$ & $\mathbf{2.59\times10^{-5}}$ & $8.39\times10^{-6}$ & $1.45\times10^{-4}$ & $\mathbf{7.77\times10^{-6}}$ \\
        { SH $(m = 1, n = 1)$} & $1.92\times10^{-2}$ & $\mathbf{5.57\times10^{-3}}$ & $1.20\times10^{-2}$ & $\mathbf{7.03\times10^{-5}}$ & $4.16\times10^{-4}$ & $2.60\times10^{-4}$ \\
        { SH $(m = 0, n = 2)$} & $\mathbf{5.94\times10^{-5}}$ & $4.46\times10^{-3}$ & $4.79\times10^{-4}$ & $\mathbf{1.02\times10^{-5}}$ & $3.17\times10^{-4}$ & $2.55\times10^{-5}$ \\
        { SH $(m = 1, n = 2)$} & $3.25\times10^{-2}$ & $1.81\times10^{-2}$ & $\mathbf{2.37\times10^{-3}}$ & $2.27\times10^{-3}$ & $8.90\times10^{-4}$ & $\mathbf{1.45\times10^{-4}}$ \\
        { SH $(m = 2, n = 2)$} & $\mathbf{1.17\times10^{-2}}$ & $1.59\times10^{-2}$ & $2.16\times10^{-2}$ & $1.09\times10^{-2}$ & $8.32\times10^{-2}$ & $\mathbf{3.66\times10^{-5}}$ \\
        \hline
    \end{tabular}
    \caption{Fitting Special Functions.
    JE: Jacobian elliptic functions,
    IE1/IE2: Incomplete elliptic integral of the first/second kind,
    B1/B2: Bessel function of the first/second kind,
    MB1/MB2: Modified Bessel function of the first/second kind,
    AL: Associated Legendre function,
    SH: spherical harmonics.
    All the values are test RMSE loss, the less the better. The best results are marked as boldface.
    PowerMLP achieves the best results on 11/10 (small/large size) out of 16 cases.
    }
    \label{tab:specialfunc}
\end{table*}

\subsubsection{Knot Theory}
\begin{figure}[!t]
    \centering
        \includegraphics[width=0.9\linewidth]{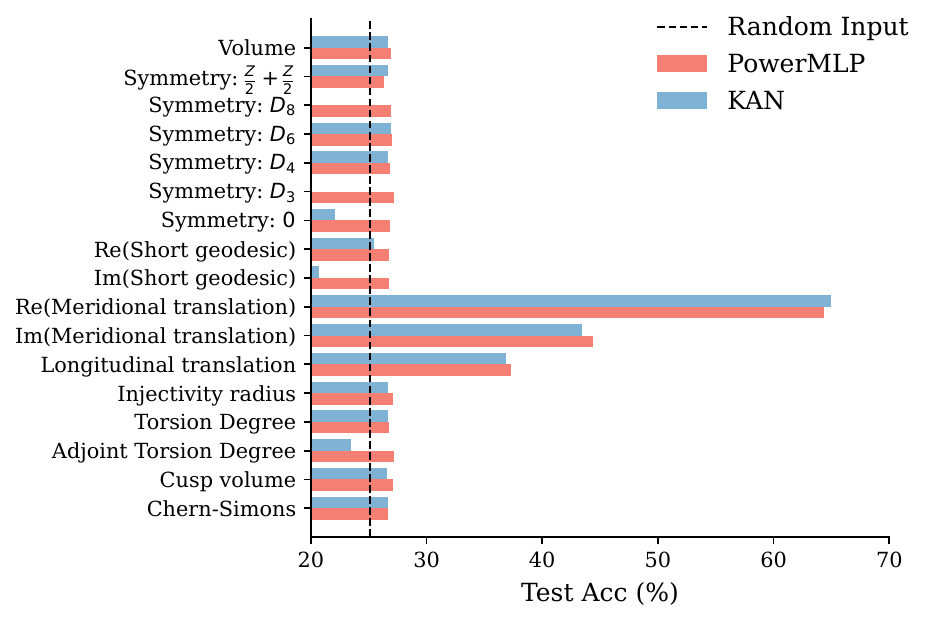}\\
        \includegraphics[width=.9\linewidth]{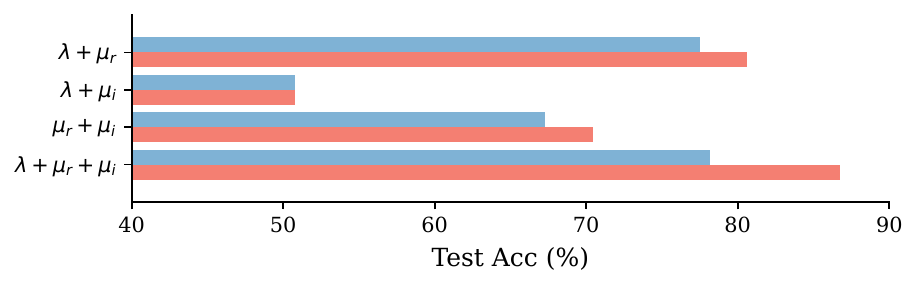}
    \caption{In the upper figure, PowerMLP can correctly find that $3$ of $17$ geometric invariants have influence on the output. Additionally, PowerMLP outperforms KAN in $15$ of $17$ input cases while KAN fails to converge with Symmetry $D_3$ or $D_8$ as input. In the bottom figure, trained on part or all of the $3$ influencing geometric invariants, PowerMLP achieves much higher test accuracy than KAN in $3$ cases.
    }
    \label{fig:knotsacc}
\end{figure}

\citet{davies2021advancing} used MLP to discover highly non-trivial relations among $17$ \textit{geometric invariants} and \textit{signatures} in knot theory. With $17$ geometric invariants as inputs and the corresponding signature as output, they trained MLPs and found a strong connection between signatures and $3$ of all $17$ geometric invariants: the \textit{longitudinal translation $\lambda$, real and image part of {meridional translation} $\mu_r,\mu_i$}.
Based on this observation, they proved a theorem and explained the connection theoretically, which is an interesting example of AI for math tasks~\cite{davies2022signature}.

\citet{liu2024kan} reproduced the same experiment with KANs. Following their settings, we conduct the experiment with our PowerMLP. In Figure \ref{fig:knotsacc}, we show the test accuracy of using a single geometric invariant as input each time to predict the signature. Most geometric invariants are close to random inputs, but the longitudinal translation, real and image part of the meridional translation show relationships with the signature. For better comparison, we also show results of a KAN with the same depth and almost same number of parameters. KAN performs worse than PowerMLP in $15$ of $17$ geometric invariants. In particular, KAN fails to converge with Symmetry $D_3$ or $D_8$ as input.

Furthermore, in Figure \ref{fig:knotsacc}, we show test accuracy of PowerMLPs trained on all or part of the three relevant geometric invariants $\lambda,\,\mu_i$ and $\mu_r$. With a test accuracy of $86.74\%$, PowerMLP successfully validates the connection among signature and $\lambda,\,\mu_i$, $\mu_r$. Furthermore, with a test accuracy of $80.62\%$, PowerMLP can also find the connection among $\lambda,\mu_r$ and the signature discovered by KAN. More importantly, compared to the close test accuracy of KAN and PowerMLP in single geometric invariant input, PowerMLP achieves much higher test accuracy than KAN in $3$ out of $4$ cases of combination input. This indicates that PowerMLP can better utilize the correlation between inputs.

\subsection{More Complex Tasks}
\begin{figure}[t]
    \centering
    \includegraphics[width=0.90\linewidth]{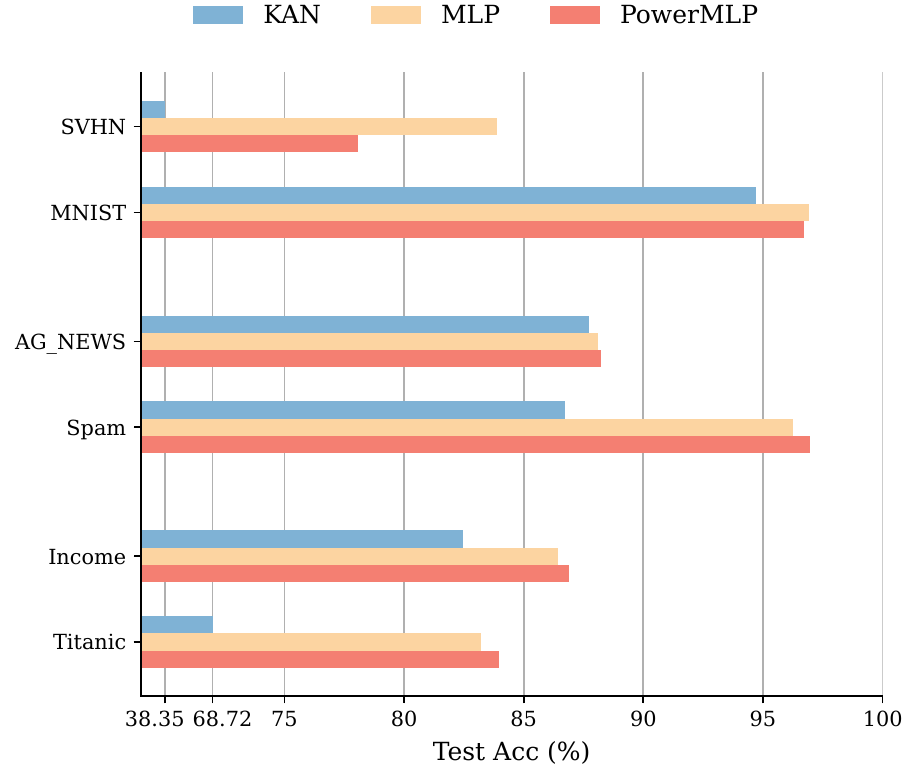}
    \caption{Test accuracy of three networks on multiple classification tasks.}
    \label{fig:clsacc}
\end{figure}
\label{sec:complex}
We perform three more complex tasks.
We use three networks with approximately the same number of parameters.

\subsubsection{Machine Learning}
For basic machine learning tasks, we conduct two experiments on Titanic and Income~\cite{becker196adult}, which are classification tasks of small input dimension.
In Figure \ref{fig:clsacc}, we show the test accuracy of three networks, and PowerMLP outperforms MLP and KAN.

\subsubsection{Natural Language Processing}
We conduct two experiments on SMS Spam Collection (Spam)~\cite{gomez2006content} and AG\_NEWS~\cite{zhang2015character} dataset, which are text classification tasks. We use TF-IDF transformation~\cite{ramos2003using} to convert text into vectors, and \textit{networks need to deal with high-dimensional sparse inputs}.
In Figure \ref{fig:clsacc}, we show the test accuracy of three networks, and PowerMLP outperforms MLP and KAN.

\subsubsection{Image Classification}
For image classification tasks, we conduct two experiments on MNIST~\cite{lecun1998gradient} and SVHN~\cite{netzer2011reading} datasets. We convert SVHN data to greyscale images, and flatten image tensor into $1$-dimension vectors. In this task, \textit{networks need to deal with high-dimensional inputs with strong connection among different input dimensions}.
In Figure \ref{fig:clsacc}, we show the test accuracy of three networks, and PowerMLP outperforms KAN.

In summary, with almost the same number of parameters and the same depth, PowerMLP achieves better accuracy than KAN in all tasks.
PowerMLP outperforms MLP in machine learning (Income, Titanic) and language processing (AG\_NEWS, Spam), while performing worse in image classification (SVHN, MNIST).

\subsection{Training Time}
\label{sec:speed}
In Table \ref{tab:speed}, eight tasks are considered. The first five tasks are function regression tasks in Section \ref{sec:ai4sci}, and the last three tasks are machine learning, language processing, and image classification tasks in Section \ref{sec:complex}.
For better comparison, the experiments are on a single NVIDIA GeForce RTX 4090 GPU, repeated each task $10$ times to take an average, and networks in each task are trained
with the same hyperparameters.
From Table \ref{tab:speed}, among all tasks and in different numbers of parameters, the training time of PowerMLP is close to MLP, which is about 40 times less than KAN. This is consistent with our theoretical analysis in Section \ref{sec-FLOP}.

\begin{table}[!t]
    \centering
    \small
    \begin{tabular}{c c c c c}
        \hline
        \multirow{2}{*}{Tasks} & \multirow{2}{*}{\#Params} & \multicolumn{3}{c}{Time(s)}\\
        & & KAN & MLP & PowerMLP\\
        \hline
        {JE} & \textasciitilde 25 & 17.73 & 0.583 & 0.738 \\
        {IE1} & \textasciitilde 25 & 25.32 & 0.430 & 0.674 \\
        {IE2} & \textasciitilde 25 & 31.91 & 0.584 & 0.709 \\
        {B1} & \textasciitilde 25 & 29.68 & 0.422 & 0.646 \\
        {B2} & \textasciitilde 25 & 35.20 & 0.559 & 0.696 \\
        Titanic & \textasciitilde 100 & 35.07 & 0.529 & 0.591 \\
        Spam & \textasciitilde 800 & 62.06 & 0.568 & 0.655 \\
        SVHN & \textasciitilde $1.3\times10^5$ & 89.82 & 1.53 & 2.39\\
        \hline
    \end{tabular}\\
    \caption{Training times on 8 tasks.
    Training times of PowerMLP are about 40 times smaller than KAN on average.}
    \label{tab:speed}
\end{table}
To be more comprehensive, in Figure \ref{fig:convergence}, we give the variation curves of the test RMSE loss relative to the training time in one training progress. Two curves with similar colors represent two training processes of the same network. We can see that MLP and PowerMLP also converge much faster than KAN in terms of training time.

\begin{figure}[!t]
    \centering
    \includegraphics[width=0.9\linewidth]{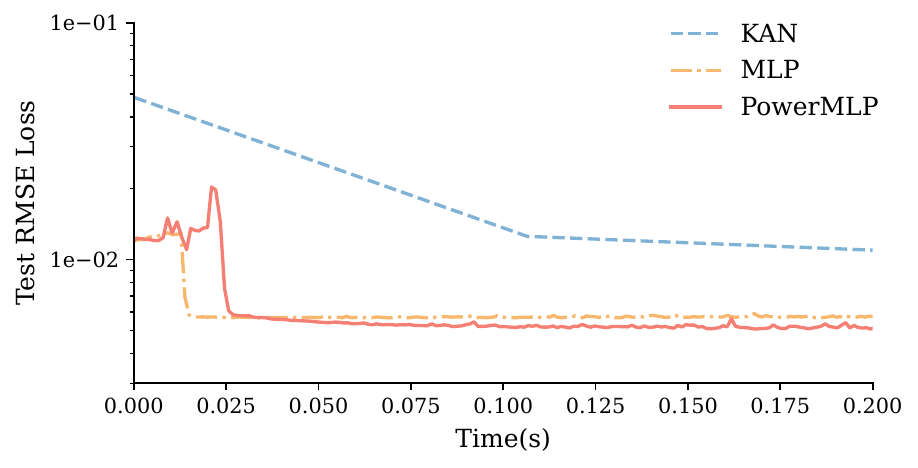}
    \caption{Time of convergence. MLP and PowerMLP converge much faster than KAN.}
    \label{fig:convergence}
\end{figure}

\begin{figure}[!t]
    \centering
    \includegraphics[width=0.9\linewidth]{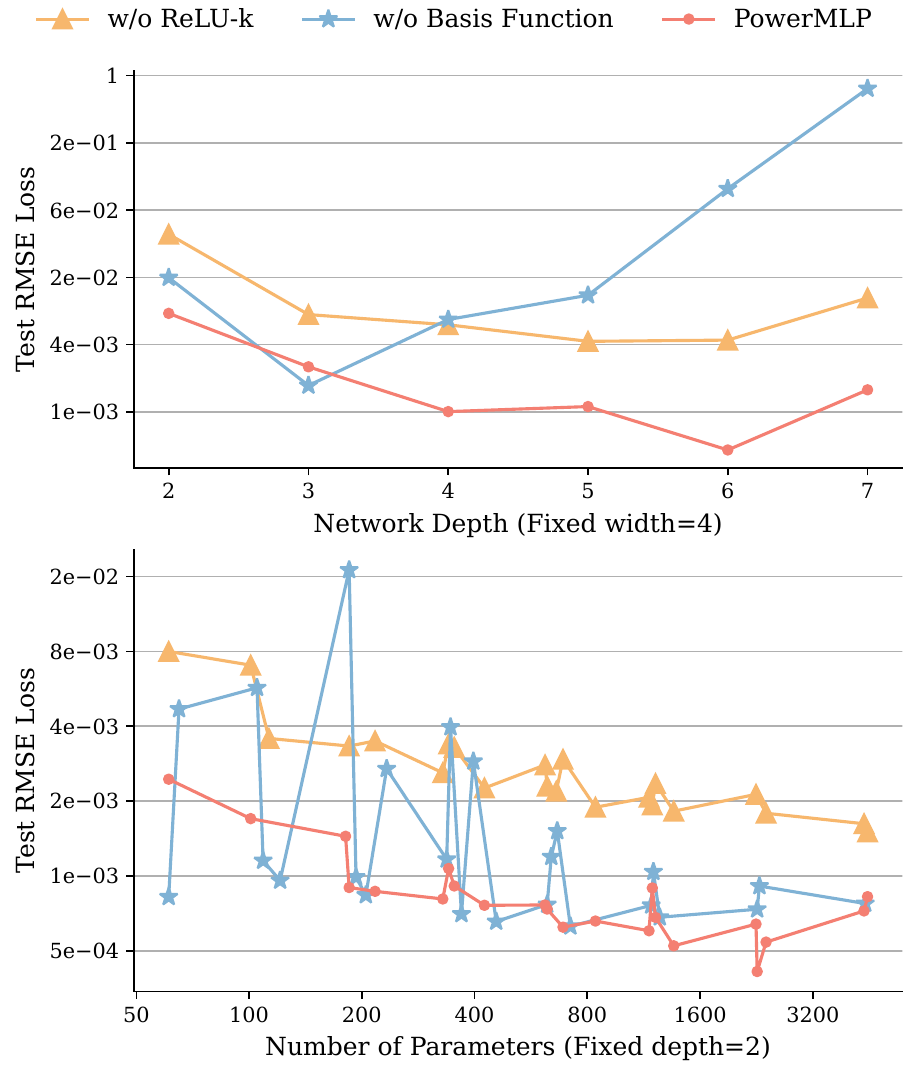}
    \captionof{figure}{Ablation study.
    Basis function enhances training stability, while ReLU-$k$ improves expressive ability.}
    \label{fig:ablation}
\end{figure}

\subsection{Ablation Study}
We show that both the basis function and the ReLU-$k$ are useful in PowerMLP.
In Figure \ref{fig:ablation}, we compare three networks, PowerMLP, PowerMLP without basis function, and PowerMLP without ReLU-$k$ activation (ReLU instead) on a function regression task for $f(x,y)=x\exp(-y)$.

The upper graph shows the RMSE loss on the test set with variant network depth for fixed width of $4$.
Although PowerMLPs without basis function perform well in shallow networks, they fail to converge for large depth.
The bottom graph shows the influence of the number of parameters, with a fixed network depth of $2$.
PowerMLPs without basis function achieve nearly the same test RMSE loss with PowerMLP for some networks, while performing much worse in other cases, indicating that basis function enhances training stability. In all situations, PowerMLPs perform better than PowerMLPs without ReLU-$k$, consistent with ReLU-$k$'s better expressive ability.

\section{Conclusion}
In this paper, we introduce a novel neural network architecture, PowerMLP, which employs the powers of ReLU as activation and expresses linear combination as inner product to represent spline function.
PowerMLP can be viewed as a more efficient and stronger version of KAN.
In terms of expressiveness, PowerMLPs define a larger or equal function space than KANs.
In terms of efficiency, PowerMLPs can be trained with over 10 times fewer FLOPs than KANs.
We conducted comprehensive experiments to demonstrate that PowerMLPs achieve higher accuracy in most cases and can be trained about 40 times faster compared to KANs.

{\bf Limitations and Future Work.}
A limitation of PowerMLP is that it fails to achieve exceptional performance in complicated computer vision tasks and long-text processing.
This stems from PowerMLP's relatively simple architecture, which lacks specialized structures such as convolutional layers or attention mechanisms.
However, given that PowerMLP shares a similar underlying architecture with traditional MLP, it is feasible to substitute the MLP parts in CNNs or Transformers with PowerMLP naturally and expect that more complicated problems can be solved better by using PowerMLP as a basic building block.

\newpage
\section*{Acknowledgements}
This work is partially supported by
NKRDPC grant 2018YFA0704705, NSFC grant 12288201,
and the Beijing Natural Science Foundation grant QY23156.


\begin{thebibliography}{33}
\providecommand{\natexlab}[1]{#1}

\bibitem[{Aghaei(2024)}]{aghaei2024rkanrationalkolmogorovarnoldnetworks}
Aghaei, A.~A. 2024.
\newblock rKAN: Rational Kolmogorov-Arnold Networks.
\newblock \emph{arXiv preprint arXiv:2406.14495}.

\bibitem[{Becker and Kohavi(1996)}]{becker196adult}
Becker, B.; and Kohavi, R. 1996.
\newblock {Adult}.
\newblock UCI Machine Learning Repository.
\newblock {DOI}: https://doi.org/10.24432/ C5XW20.

\bibitem[{Bozorgasl and Chen(2024)}]{bozorgasl2024wavkanwaveletkolmogorovarnoldnetworks}
Bozorgasl, Z.; and Chen, H. 2024.
\newblock Wav-KAN: Wavelet Kolmogorov-Arnold Networks.
\newblock \emph{arXiv preprint arXiv:2405.12832}.

\bibitem[{Carlo et~al.(2024)Carlo, Mastropietro, Anagnostopoulos, and Aris}]{decarlo2024kolmogorovarnoldgraphneuralnetworks}
Carlo, G.~D.; Mastropietro, A.; Anagnostopoulos, A.; and Aris. 2024.
\newblock Kolmogorov-Arnold Graph Neural Networks.
\newblock \emph{arXiv preprint arXiv:2406.18354}.

\bibitem[{Cox(1972)}]{cox1972numerical}
Cox, M.~G. 1972.
\newblock {The Numerical Evaluation of B-Splines}.
\newblock \emph{IMA Journal of Applied Mathematics}, 10(2): 134--149.

\bibitem[{Curry and Schoenberg(1947)}]{curry1947on}
Curry, H.~B.; and Schoenberg, I.~J. 1947.
\newblock On spline distributions and their limits-the polya distribution functions.
\newblock \emph{Bulletin of the American Mathematical Society}, 53(11): 1114--1114.

\bibitem[{Davies et~al.(2022)Davies, Juh{\'a}sz, Lackenby, and Tomasev}]{davies2022signature}
Davies, A.; Juh{\'a}sz, A.; Lackenby, M.; and Tomasev, N. 2022.
\newblock The signature and cusp geometry of hyperbolic knots.
\newblock \emph{arXiv preprint arXiv:2111.15323}.

\bibitem[{Davies et~al.(2021)Davies, Veli{\v{c}}kovi{\'c}, Buesing, Blackwell, Zheng, Toma{\v{s}}ev, Tanburn, Battaglia, Blundell, Juh{\'a}sz et~al.}]{davies2021advancing}
Davies, A.; Veli{\v{c}}kovi{\'c}, P.; Buesing, L.; Blackwell, S.; Zheng, D.; Toma{\v{s}}ev, N.; Tanburn, R.; Battaglia, P.; Blundell, C.; Juh{\'a}sz, A.; et~al. 2021.
\newblock Advancing mathematics by guiding human intuition with AI.
\newblock \emph{Nature}, 600(7887): 70--74.

\bibitem[{de~Boor(1978)}]{deBoor1978practical}
de~Boor, C. 1978.
\newblock \emph{A Practical Guide to Splines}, volume~27.
\newblock New York: Springer.

\bibitem[{Galitsky(2024)}]{boris2024kolmogorov}
Galitsky, B.~A. 2024.
\newblock Kolmogorov-Arnold network for word-level explainable meaning representation.
\newblock \emph{Preprints}.

\bibitem[{Glorot, Bordes, and Bengio(2011)}]{glorot2011deep}
Glorot, X.; Bordes, A.; and Bengio, Y. 2011.
\newblock Deep Sparse Rectifier Neural Networks.
\newblock In \emph{Proceedings of the Fourteenth International Conference on Artificial Intelligence and Statistics}, volume~15 of \emph{Proceedings of Machine Learning Research}, 315--323.

\bibitem[{G{\'o}mez~Hidalgo et~al.(2006)G{\'o}mez~Hidalgo, Bringas, S{\'a}nz, and Garc{\'\i}a}]{gomez2006content}
G{\'o}mez~Hidalgo, J.~M.; Bringas, G.~C.; S{\'a}nz, E.~P.; and Garc{\'\i}a, F.~C. 2006.
\newblock Content based SMS spam filtering.
\newblock In \emph{Proceedings of the 2006 ACM symposium on Document engineering}, 107--114.

\bibitem[{Greville(1969)}]{greville1969theory}
Greville, T. 1969.
\newblock \emph{Theory and Applications of Spline Functions}.
\newblock Army. Mathematics Research Center, Madison, Wis. Publication. Academic Press.
\newblock ISBN 9780123029508.

\bibitem[{Haykin(1998)}]{haykin1998neural}
Haykin, S. 1998.
\newblock \emph{Neural Networks: A Comprehensive Foundation}.
\newblock USA: Prentice Hall PTR, 2nd edition.
\newblock ISBN 0132733501.

\bibitem[{He et~al.(2015)He, Zhang, Ren, and Sun}]{he2015delving}
He, K.; Zhang, X.; Ren, S.; and Sun, J. 2015.
\newblock Delving deep into rectifiers: Surpassing human-level performance on imagenet classification.
\newblock In \emph{Proceedings of the IEEE international conference on computer vision}, 1026--1034.

\bibitem[{Hendrycks and Gimpel(2023)}]{hendrycks2023gaussianerrorlinearunits}
Hendrycks, D.; and Gimpel, K. 2023.
\newblock Gaussian Error Linear Units (GELUs).
\newblock \emph{arXiv preprint arXiv:1606.08415}.

\bibitem[{Inzirillo and Genet(2024)}]{inzirillo2024sigkansignatureweightedkolmogorovarnoldnetworks}
Inzirillo, H.; and Genet, R. 2024.
\newblock SigKAN: Signature-Weighted Kolmogorov-Arnold Networks for Time Series.
\newblock \emph{arXiv preprint arXiv:2406.17890}.

\bibitem[{Kiamari et~al.(2024)Kiamari, Kiamari, Kiamari, and Krishnamachari}]{kiamari2024gkangraphkolmogorovarnoldnetworks}
Kiamari, M.; Kiamari, M.; Kiamari, M.; and Krishnamachari, B. 2024.
\newblock GKAN: Graph Kolmogorov-Arnold Networks.
\newblock \emph{arXiv preprint arXiv:2406.06470}.

\bibitem[{LeCun et~al.(1989)LeCun, Boser, Denker, Henderson, Howard, Hubbard, and Jackel}]{lecun1989backpropagation}
LeCun, Y.; Boser, B.; Denker, J.~S.; Henderson, D.; Howard, R.~E.; Hubbard, W.; and Jackel, L.~D. 1989.
\newblock Backpropagation applied to handwritten zip code recognition.
\newblock \emph{Neural computation}, 1(4): 541--551.

\bibitem[{LeCun et~al.(1998)LeCun, Bottou, Bengio, and Haffner}]{lecun1998gradient}
LeCun, Y.; Bottou, L.; Bengio, Y.; and Haffner, P. 1998.
\newblock Gradient-based learning applied to document recognition.
\newblock \emph{Proceedings of the IEEE}, 86(11): 2278--2324.

\bibitem[{Li, Tang, and Yu(2020{\natexlab{a}})}]{li2020better}
Li, B.; Tang, S.; and Yu, H. 2020{\natexlab{a}}.
\newblock Better Approximations of High Dimensional Smooth Functions by Deep Neural Networks with Rectified Power Units.
\newblock \emph{Communications in Computational Physics}, 27(2): 379--411.

\bibitem[{Li, Tang, and Yu(2020{\natexlab{b}})}]{li2020powernet}
Li, B.; Tang, S.; and Yu, H. 2020{\natexlab{b}}.
\newblock PowerNet: Efficient Representations of Polynomials and Smooth Functions by Deep Neural Networks with Rectified Power Units.
\newblock \emph{J. Math. Study}, 53(2): 159--191.

\bibitem[{Li(2024)}]{li2024kolmogorovarnoldnetworksradialbasis}
Li, Z. 2024.
\newblock Kolmogorov-Arnold Networks are Radial Basis Function Networks.
\newblock \emph{arXiv preprint arXiv:2405.06721}.

\bibitem[{Liu et~al.(2024)Liu, Wang, Vaidya, Ruehle, Halverson, Solja${\rm\breve{x}}$i${\rm\acute{c}}$, Hou, and Tegmark}]{liu2024kan}
Liu, Z.; Wang, Y.; Vaidya, S.; Ruehle, F.; Halverson, J.; Solja${\rm\breve{x}}$i${\rm\acute{c}}$, M.; Hou, T.~Y.; and Tegmark, M. 2024.
\newblock KAN: Kolmogorov-Arnold Networks.
\newblock \emph{arXiv preprint arXiv:2404.19756v4}.

\bibitem[{Maas et~al.(2013)Maas, Hannun, Ng et~al.}]{maas2013rectifier}
Maas, A.~L.; Hannun, A.~Y.; Ng, A.~Y.; et~al. 2013.
\newblock Rectifier nonlinearities improve neural network acoustic models.
\newblock In \emph{Proc. icml}, volume~30, 3.

\bibitem[{Mhaskar(1993)}]{Mhaskar1993ApproximationPO}
Mhaskar, H.~N. 1993.
\newblock Approximation properties of a multilayered feedforward artificial neural network.
\newblock \emph{Advances in Computational Mathematics}, 1: 61--80.

\bibitem[{Molchanov et~al.(2017)Molchanov, Tyree, Karras, Aila, and Kautz}]{molchanov2017pruning}
Molchanov, P.; Tyree, S.; Karras, T.; Aila, T.; and Kautz, J. 2017.
\newblock Pruning Convolutional Neural Networks for Resource Efficient Inference.
\newblock \emph{arXiv preprint arXiv:1611.06440}.

\bibitem[{Netzer et~al.(2011)Netzer, Wang, Coates, Bissacco, Wu, Ng et~al.}]{netzer2011reading}
Netzer, Y.; Wang, T.; Coates, A.; Bissacco, A.; Wu, B.; Ng, A.~Y.; et~al. 2011.
\newblock Reading digits in natural images with unsupervised feature learning.
\newblock In \emph{NIPS workshop on deep learning and unsupervised feature learning}.

\bibitem[{Ramos et~al.(2003)}]{ramos2003using}
Ramos, J.; et~al. 2003.
\newblock Using tf-idf to determine word relevance in document queries.
\newblock In \emph{Proceedings of the first instructional conference on machine learning}, volume 242, 29--48.

\bibitem[{Vaswani(2017)}]{vaswani2017attention}
Vaswani, A. 2017.
\newblock Attention is all you need.
\newblock \emph{arXiv preprint arXiv:1706.03762}.

\bibitem[{Xu, Chen, and Wang(2024)}]{xu2024kolmogorovarnoldnetworkstimeseries}
Xu, K.; Chen, L.; and Wang, S. 2024.
\newblock Kolmogorov-Arnold Networks for Time Series: Bridging Predictive Power and Interpretability.
\newblock \emph{arXiv preprint arXiv:2406.02496}.

\bibitem[{Yu, Yu, and Wang(2024)}]{yu2024kanmlpfairercomparison}
Yu, R.; Yu, W.; and Wang, X. 2024.
\newblock KAN or MLP: A Fairer Comparison.
\newblock \emph{arXiv preprint arXiv:2407.16674}.

\bibitem[{Zhang, Zhao, and LeCun(2015)}]{zhang2015character}
Zhang, X.; Zhao, J.; and LeCun, Y. 2015.
\newblock Character-level convolutional networks for text classification.
\newblock \emph{Advances in neural information processing systems}, 28.

\end{thebibliography}

\appendix
\onecolumn
\section{Proofs of Section 4}
\subsection{Proofs of Section 4.2}

\begin{lemma*}[Lemma \ref{lem:bspline}, Represent the B-spline with powers of ReLU]
If $t_u \neq t_v (\forall u \neq v)$, then the $k$-order B-spline on the knot sequence $t=(t_j, \cdots, t_{j+k+1})$ can be represented as a linear combination of $\sigma_{k}$ functions:
    \begin{align}
    \label{apd:eq-Bs}
        B_{j,k,t}(x)&=\sum_{i=j}^{j+k+1}\frac{t_{j+k+1}-t_j}{\prod_{l=j}^{j+k+1,l\neq i}(t_l-t_i)}\sigma_{k}(x-t_i).
    \end{align}
\end{lemma*}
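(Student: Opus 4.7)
The plan is to reduce the lemma to the classical Curry--Schoenberg representation of B-splines as divided differences of truncated power functions, and then apply the Lagrange-form expansion of a divided difference at distinct nodes.

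First, I would invoke the classical identity
$$B_{j,k,t}(x) = (t_{j+k+1}-t_j)\,[t_j,\ldots,t_{j+k+1}]_\tau\,(\tau-x)^k_+,$$
where the divided difference is taken with respect to $\tau$. This representation is the historical definition of the B-spline cited from \citet{curry1947on}, and if it is not taken as given, it can be verified by induction on $k$ against the de Boor--Cox recursion in Eq.~\eqref{eq:recur}, using the standard recursion for divided differences on both sides.

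Next, since $(\tau-x)^k$ is a polynomial of degree $k$ in $\tau$ and the divided difference is of order $k+1$, one has $[t_j,\ldots,t_{j+k+1}]_\tau(\tau-x)^k = 0$. Using the pointwise decomposition $(\tau-x)^k = (\tau-x)^k_+ + (-1)^k(x-\tau)^k_+$, verified by cases on the sign of $\tau-x$, this yields
$$[t_j,\ldots,t_{j+k+1}]_\tau\,(\tau-x)^k_+ \;=\; (-1)^{k+1}\,[t_j,\ldots,t_{j+k+1}]_\tau\,\sigma_k(x-\tau).$$
Then I expand the remaining divided difference in Lagrange form, which is valid because the knots are pairwise distinct by hypothesis:
$$[t_j,\ldots,t_{j+k+1}]\,f \;=\; \sum_{i=j}^{j+k+1}\frac{f(t_i)}{\prod_{l\neq i}(t_i-t_l)},$$
applied to $f(\tau)=\sigma_k(x-\tau)$ so that $f(t_i)=\sigma_k(x-t_i)$.

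Finally, I would reconcile the signs. Each denominator $\prod_{l\neq i}(t_i-t_l)$ has $k+1$ factors and equals $(-1)^{k+1}\prod_{l\neq i}(t_l-t_i)$, so the $(-1)^{k+1}$ from the truncated-power swap and the $(-1)^{k+1}$ from reversing each product cancel exactly. Pulling the prefactor $(t_{j+k+1}-t_j)$ inside the sum then produces the stated formula. The main obstacle I anticipate is bookkeeping: ensuring that the two sign flips cancel precisely rather than leaving a spurious $(-1)^{k+1}$, and that the convention for $(\cdot)^k_+$ matches $\sigma_k$ as defined in Eq.~\eqref{eq-sigmak}. A secondary concern is handling the $k=0$ case if one takes the inductive route to establish the Curry--Schoenberg identity, since $\sigma_0$ must be interpreted via the Heaviside convention to agree with the piecewise definition in Eq.~\eqref{eq-bsp}.
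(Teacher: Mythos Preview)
Your argument is correct and takes a genuinely different route from the paper. The paper proves the formula by direct induction on $k$: it verifies the $k=0$ base case by hand using the Heaviside interpretation of $\sigma_0$, then substitutes the inductive hypothesis for $B_{j,k,t}$ and $B_{j+1,k,t}$ into the de Boor--Cox recursion \eqref{eq:recur} and performs an explicit algebraic regrouping of the $\sigma_k$ terms to obtain the $\sigma_{k+1}$ formula. Your approach instead imports the Curry--Schoenberg divided-difference identity wholesale, kills the polynomial part via the order of the divided difference, swaps $(\tau-x)^k_+$ for $\sigma_k(x-\tau)$, and reads off the Lagrange expansion. This is more conceptual and avoids the term-by-term bookkeeping of the inductive step, at the cost of relying on a nontrivial external identity; the paper's induction is longer but fully self-contained within the de Boor--Cox framework it has already set up. Your sign analysis (the two $(-1)^{k+1}$ factors cancelling) is correct, and your caveat about the $k=0$ Heaviside convention is exactly the point the paper handles explicitly in its base case.
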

\begin{proof}
We prove the lemma by induction on $k$.
    When $k=0$, it can be verified as follows
    \begin{equation*}
        \hbox{LHD}=\sigma_0(x-t_j)-\sigma_0(x-t_{j+1})=\left\{\begin{aligned}
            &1,\quad&t_j\leq x<t_j+1\\
            &0,&\mathrm{otherwise}
        \end{aligned}\right.=\hbox{RHD}.
    \end{equation*}
    Note that $\sigma_0(t)$ is the binary step function which equals $1$ for $t\geq 0$ and $0$ otherwise. Therefore, when $k=0$, the lemma holds.

Assume that the Eq. \eqref{apd:eq-Bs} holds for $k$.
According to Eq. \eqref{eq:recur} (Refer to the main text of the paper), the $k+1$-order B-spline satisfies
\begin{align*}
    &B_{j,k+1,t}(x)\\
    &=\frac{x-t_j}{t_{j+k+1}-t_j}B_{j,k,t}(x)+\frac{t_{j+k+2}-x}{t_{j+k+2}-t_{j+1}}B_{j+1,k,t}(x)\\
    &=\frac{1}{\prod_{l=j+1}^{j+k+1}(t_l-t_j)}(x-t_j)\sigma_{k}(x-t_j)+\frac{1}{\prod_{l=j+1}^{j+k+1}(t_l-t_{j+k+2})}(t_{j+k+2}-x)\sigma_{k}(x-t_{j+k+2})\\
    &\quad+\sum_{i=j+1}^{j+k+1}\left[\frac{1}{\prod_{l=j}^{j+k+1,l\neq i}(t_l-t_i)}(x-t_j)+\frac{1}{\prod_{l=j+1}^{j+k+2,l\neq i}(t_l-t_i)}(t_{j+k+2}-x)\right]\sigma_{k}(x-t_i)\\
    &=\frac{t_{j+k+2}-t_j}{\prod_{l=j+1}^{j+k+2}(t_l-t_j)}\sigma_{k+1}(x-t_j)+\frac{t_{j+k+2}-t_j}{\prod_{l=j}^{j+k+1}(t_l-t_{j+k+2})}\sigma_{k+1}(x-t_{j+k+2})\\
    &\quad+\sum_{i=j+1}^{j+k+1}\left[\left(\frac{1}{\prod_{l=j}^{j+k+1,l\neq i}(t_l-t_i)}-\frac{1}{\prod_{l=j+1}^{j+k+2,l\neq i}(t_l-t_i)}\right)(x-t_i)\right.\\
    &\quad\left.+\frac{t_j-t_i}{\prod_{l=j}^{j+k+1,l\neq i}(t_l-t_i)}+\frac{t_{j+k+2}-t_i}{\prod_{l=j+1}^{j+k+2,l\neq i}(t_l-t_i)}\right]\sigma_{k}(x-t_i)\\
    &=\frac{t_{j+k+2}-t_j}{\prod_{l=j+1}^{j+k+2}(t_l-t_j)}\sigma_{k+1}(x-t_j)+\frac{t_{j+k+2}-t_j}{\prod_{l=j}^{j+k+1}(t_l-t_{j+k+2})}\sigma_{k+1}(x-t_{j+k+2})\\
    &\quad+\sum_{i=j+1}^{j+k+1}\frac{t_{j+k+2}-t_j}{\prod_{l=j}^{j+k+2,l\neq i}(t_l-t_i)}\sigma_{k+1}(x-t_i)\\
    &=\sum_{i=j}^{j+k+2}\frac{t_{j+k+2}-t_j}{\prod_{l=j}^{j+k+2,l\neq i}(t_l-t_i)}\sigma_{k+1}(x-t_i).
\end{align*}
    %
    Thus, the Eq. \eqref{apd:eq-Bs} holds for $k+1$ and the lemma is proved.
\end{proof}

\begin{theorem*}[Theorem \ref{th-k4p}, KAN is a Subset of PowerMLP]
    Fix the input dimension of networks as $n$. Let $\mathcal{K}_{d,w,k,G,p}$ be the set of all KAN networks with depth $d$, width $w$, $p$ nonzero parameters, using $(k,G)$-spline; and $\mathcal{P}_{d,w,k,p}$ be the set of all PowerMLPs with depth $d$, width $w$, $k$-th power of ReLU  and $p$ nonzero parameters. Then it holds
    \begin{equation*}
        \mathcal{K}_{d,w,k,G,p}\subset\mathcal{P}_{d,w^2(G+k),k,p}.
    \end{equation*}
\end{theorem*}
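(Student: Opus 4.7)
The plan is to take any KAN $K \in \mathcal{K}_{d,w,k,G,p}$ and build an explicit PowerMLP in $\mathcal{P}_{d,w^2(G+k),k,p}$ realizing the same function. The only nontrivial tool is Lemma \ref{lem:bspline}; the rest is a careful, layer-by-layer bookkeeping construction.

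First I apply Lemma \ref{lem:bspline} to every B-spline appearing in $K$, which rewrites each spline activation as
\[
 \mathrm{spline}_{\ell,q,p}(x) \;=\; \sum_i d_{\ell,q,p,i}\,\sigma_k(x - t_i),
\]
a linear combination of $O(G+k)$ shifts indexed by the distinct knots of the grid of layer $\ell$. Each KAN layer output then has the form
\[
(\Phi_\ell(\mathbf{x}))_q \;=\; \sum_p u_{\ell,q,p}\,b(x_p) \;+\; \sum_{p,i} A_{\ell,q,p,i}\,\sigma_k(x_p - t_i),
\]
i.e., a linear combination of basis-function values and coordinate-wise ReLU-$k$ shifts of the layer's inputs.

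Second, I mimic $\Phi_\ell$ by a PowerMLP layer $\Psi_\ell$ whose intermediate state is expanded. For each (input coordinate $p$, knot $t_i$) pair I reserve one neuron with $\omega_\ell$-row equal to $e_p$, bias entry $-t_i$, and $\alpha_\ell$-row zero; it outputs exactly $\sigma_k(x_p - t_i)$. Separate neurons kill their $\sigma_k$ branch with a strictly negative bias and use their $\alpha_\ell$-row to pass forward the required combinations of $b(x_p)$. The affine piece of the next PowerMLP layer then collapses these expanded outputs into the KAN intermediate $\mathbf{y}^{(\ell+1)}$, and simultaneously feeds the $\sigma_k$ and $\alpha\, b$ branches of that layer with the appropriate combinations for the next KAN layer. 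The final PowerMLP layer is affine by definition and executes the outermost linear combination of the last KAN layer. Depth is preserved one-to-one; the width at every layer is at most $n_{\ell+1}n_\ell(G+k) \le w^2(G+k)$; and every nonzero trainable parameter of the new PowerMLP is inherited from a nonzero trainable parameter of $K$ (the coefficients $u_{\ell,q,p}$, $v_{\ell,q,p}$, and the spline coefficients repackaged through Lemma \ref{lem:bspline}), while the structural weights $e_p$ and the shifts $-t_i$ are fixed constants that do not enter the parameter count, so the nonzero-parameter budget remains $p$.

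The hardest step will be routing the KAN basis function $b(y_q^{(\ell)})$ through deeper layers: in KAN, $b$ acts on the intermediate scalar $y_q^{(\ell)}$, which is itself a linear combination of several $\sigma_k$- and $b$-values from the previous layer, whereas in PowerMLP $b$ acts coordinate-wise on the previous layer's output, so $b(y_q^{(\ell)})$ can only be produced if $y_q^{(\ell)}$ appears as one of those coordinates. I plan to handle this as an induction invariant on $\ell$: at every depth, reserve a sub-block of PowerMLP neurons that assemble each $y_q^{(\ell)}$ as a single coordinate of the layer's output, using the $\alpha_\ell b$ branch for the basis combination and the $\sigma_k$ branch for the single shifted term contributed by that neuron, with the remaining shifted terms routed into the separate $(p,i)$ slots described above and re-aggregated by the next layer. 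Maintaining this invariant while fitting everything inside the width budget $w^2(G+k)$ and keeping the depth pegged at $d$ is where the bulk of the careful bookkeeping of the proof lives.
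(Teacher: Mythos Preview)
Your layer-by-layer plan is the paper's: apply Lemma~\ref{lem:bspline} so that each KAN layer becomes $\alpha\, b(\mathbf{x})+\beta\,\sigma_k(\omega\mathbf{x}+\gamma)$, then fold the outer map $\beta$ into the next layer's affine weight. The paper writes out $\alpha,\beta,\omega,\gamma$ as explicit block matrices, observes that the $\omega$-rows are coordinate selectors and the bias entries are knot values (hence not counted as trainable parameters in KAN either), and reads off the width $w^2(G+k)$ and the parameter count directly; no inductive invariant is set up.

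The step you single out as the hardest --- making $b(y_q^{(\ell)})$ available at deeper layers --- is not treated at all in the paper. After the sentence ``$\beta$ can be absorbed into the weight in next layer,'' the paper simply asserts that layerwise composition works. Your concern is legitimate: in the expanded representation the KAN intermediate $y_q^{(\ell)}$ is a sum of several $\sigma_k$ terms, so it cannot appear as a single PowerMLP coordinate, and the next layer's basis branch $\alpha_{\ell+1}b(\cdot)$ acts coordinate-wise before any re-aggregation. Your proposed invariant does not close this either, since each ``assembly'' neuron you describe contributes only one $\sigma_k$ summand and therefore outputs something strictly different from $y_q^{(\ell)}$; the re-aggregation you defer to the next layer happens \emph{after} $b$ has already acted. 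In short, your plan coincides with the paper's proof on the spline side and on parameter counting, and is equally silent on the basis-function composition; neither argument actually shows how the next PowerMLP layer recovers $b(y_q^{(\ell)})$ from the expanded state.
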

\begin{proof}
    Since both KAN and PowerMLP are composition of layers, it suffices to prove that one layer of KAN can be represented by one layer of PowerMLP with same number of nonzero parameters.
    Denote the input vector
    $\mathbf{x}=(x_1,\cdots,x_n)^\top\in\mathbf{R}^n$
    and the output vector $\mathbf{z}=\Phi(\mathbf{x})\in\mathbb{R}^m$ where $n=n_\ell,\,m=n_{\ell+1}$.
    For convenience, we hide the subscript $\ell$ which connects to the layer.
    Then it is equivalent to show that for all KAN layer, there exist $\alpha,\,\beta,\,\omega,\,\gamma$ satisfying
    \begin{equation}
    \label{eq:rrn2kan}
        \begin{pmatrix}
			\sum_{p=1}^{n}\left[u_{1,p} b(x_p)+v_{1,p}\mathrm{spline}_{k,G_{1,p}}(x_p)\right]\\
			\vdots\\
			\sum_{p=1}^{n}\left[u_{m,p} b(x_p)+v_{m,p}\mathrm{spline}_{k,G_{m,p}}(x_p)\right]
		\end{pmatrix}=\alpha b(\mathbf{x})+\beta\sigma_{k-1}(\omega \mathbf{x}+\gamma)\qquad\mathrm{for\,}\mathbf{x}\in\mathbb{R}^{n}.
    \end{equation}

    \textbf{Step 1: Basis Function}

    Simply select $\alpha=(u_{j,k})_{1\leq j\leq m,1\leq k\leq n}$, then we have that
    \begin{equation*}
        \begin{pmatrix}
			\sum_{p=1}^{n}u_{1,p}b(x_p)\\
			\vdots\\
			\sum_{p=1}^{n}u_{m,p}b(x_p)
		\end{pmatrix}=\alpha b(\mathbf{x}),
    \end{equation*}
    which proves the part of the Eq. \eqref{eq:rrn2kan} about the basis function.

    \textbf{Step 2: Spline Function}

    We first express one spline function $\mathrm{spline}_{k,G_{q,p}}$ in matrix form of $\sigma_{k}$ functions.
    Denote the coefficient of the spline function before the j-th B-spline $B_{j,k,t_{q,p}}$ as $c^j_{q,p}$ (refer to Eq. \eqref{eq-sfun} in the main text of the paper)) and denote coefficients before $\sigma_k(x-t_i)$ in Eq. \eqref{apd:eq-Bs} as $\xi^i_{j,t}$. Then we have
    \begin{equation*}
        \mathrm{spline}_{k,G_{q,p}}(x_p)=\overline{\beta}_{q,p}\sigma_{k}(\overline{\omega}_{q,p}\mathbf{x}+\overline{\gamma}_{q,p}),
    \end{equation*}
    where $\overline{\beta}_{q,p},\,\overline{\omega}_{q,p}$, and $\overline{\gamma}_{q,p}$ are
    {\footnotesize
    \begin{align*}
        &\overline{\beta}_{q,p}=
        \begin{pmatrix}
            \sum_{i=-k}^{-k} c^i_{q,p}\xi^{-k}_{i,t_{q,p}} &
            \sum_{i=-k}^{-k+1} c^i_{q,p}\xi^{-k+1}_{i,t_{q,p}} &
            \cdots &
            \sum_{i=-k}^{0} c^i_{q,p}\xi^{0}_{i,t_{q,p}} &
            \cdots &
            \sum_{i=G-k-1}^{G-1} c^{i}_{q,p}\xi^{G-1}_{i,t_{q,p}}
        \end{pmatrix},
        \\
        &\overline{\omega}_{q,p}=\begin{pmatrix}
            0 & \cdots & 0 & 1 & 0 & \cdots & 0\\
            0 & \cdots & 0 & 1 & 0 & \cdots & 0\\
            \vdots & & \vdots & \vdots & \vdots & & \vdots\\
            0 & \cdots & 0 & 1 & 0 & \cdots & 0
        \end{pmatrix}\mathrm{\,(only\,the\,p-th\,column\,being\,1)},
        \quad
        \overline{\gamma}_{q,p}=-\begin{pmatrix}
            t^{-k}_{p,q} & t^{-k+2}_{p,q} & \cdots & t^{G-1}_{p,q}
        \end{pmatrix}^\top.
    \end{align*}}

    Then the spline part of the equation \eqref{eq:rrn2kan} can be calculated:
    {
    \begin{align*}
&      \quad  \begin{pmatrix}
            \sum_{p=1}^{n}v_{1,p}\mathrm{spline}_{k,G_{1,p}}(x_p)\\
            \vdots\\
            \sum_{p=1}^{n}v_{m,p}\mathrm{spline}_{k,G_{m,p}}(x_p)
        \end{pmatrix}\\
        &=
        \begin{pmatrix}
            v_{11} & \cdots & v_{1n} & 0 & \cdots & 0 & 0 & \cdots & 0 \\
            \vdots & & \vdots & \vdots & & \vdots & \vdots & & \vdots \\
            0 & \cdots & 0 & 0 & \cdots & 0 & v_{m1} & \cdots & v_{mn}
        \end{pmatrix}\begin{pmatrix}
            \mathrm{spline}_{k,G_{1,1}}(x_1)\\
            \vdots\\
            \mathrm{spline}_{k,G_{1,n}}(x_n)\\
            \vdots\\
            \mathrm{spline}_{k,G_{m,n}}(x_n)
        \end{pmatrix}\\
        &=\begin{pmatrix}
            v_{11}\overline{\beta}_{1,1} & \cdots & v_{1n}\overline{\beta}_{1,n} & 0 & \cdots & 0 & 0 & \cdots & 0 \\
            \vdots & & \vdots & \vdots & & \vdots & \vdots & & \vdots \\
            0 & \cdots & 0 & 0 & \cdots & 0 & v_{m1}\overline{\beta}_{m,1} & \cdots & v_{mn}\overline{\beta}_{m,n}
        \end{pmatrix}
        \sigma_{k}\left(\begin{pmatrix}
            \overline{\omega}_{1,1}\\
            \vdots\\
            \overline{\omega}_{1,n}\\
            \vdots\\
            \overline{\omega}_{m,n}
        \end{pmatrix}\mathbf{x}+\begin{pmatrix}
            \overline{\gamma}_{1,1}\\
            \vdots\\
            \overline{\gamma}_{1,n}\\
            \vdots\\
            \overline{\gamma}_{m,n}
        \end{pmatrix}\right)\\
        &\doteq\beta\sigma_{k}(\omega \mathbf{x}+\gamma),
    \end{align*}}
\hskip-4pt
where $\alpha\in\mathbb{R}^{m\times n},\, \beta\in\mathbb{R}^{m\times mn(G+k)},\, \omega\in\mathbb{R}^{mn(G+k)\times n},\, \gamma\in\mathbb{R}^{mn(G+k)\times 1}$.

Note that the number of parameters in $\alpha$ is equal to $mn$. $\overline{\beta}_{q,p}$ has $G+k$ nonzero parameters, thus $\beta$ has $mn(G+k)$ parameters. $\omega$ is a $0-1$ matrix that depends only on the number of grids in spline function, without any trainable parameters. $\overline{\gamma}_{q,p}$ consists of $G+k$ parameters from the knot sequence of each spline function, which is also set to be untrained in KAN. To sum up, the left-hand side of Eq. \ref{eq:rrn2kan} has $mn(G+k)$ nonzero trainable parameters, same as the right-hand side.

Notice that $\beta$ can be absorbed into the weight in next layer, so a KAN layer can be written as a PowerMLP layer. Since both KAN and PowerMLP are composition of layers, we prove that a KAN of $w$ width can be represented as a PowerMLP with same order, depth, number of nonzero parameters, and width $w^2(G+k)$.

\end{proof}

\subsection{Proofs of Section 4.3}
\begin{lemma*}[Lemma \ref{lem:affine}, Affine Transformations]\label{apd:affine}
    Consider an affine transformation on $\mathbb{R}$:
 $       \mathcal{A}(x)=\omega x+\gamma$.
For any $G$, we can find a $(k,G)$-grid
$t=(\underline{t_{-k},\cdots,t_{-1}},t_0,t_1,\cdots,t_G,$
$\underline{t_{G+1},\cdots,t_{G+k}}),$
    and a $k$-order spline function
\begin{equation}
    \label{apd-eq:affine}
    \begin{aligned}
        &\mathrm{spline}_{k,G}(x)=\sum_{j=-k}^{G-1}c_jB_{j,k,t}(x),\\
    \end{aligned}
    \end{equation}
where $c_j=\left(\sfrac{\sum_{i=j+1}^{j+k}t_i}{k}\right)\omega+\gamma,\,k>0$,
     such that $\mathcal{A}(x)=\mathrm{spline}_{k,G}(x)$ for $t_0\le x\le t_G$.
\end{lemma*}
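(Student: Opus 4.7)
The plan is to reduce the claim to two classical identities for B-splines on a $(k,G)$-grid $t$, restricted to $x\in[t_0,t_G]$: the \emph{partition of unity}
\[
\sum_{j=-k}^{G-1} B_{j,k,t}(x)=1,
\]
and the \emph{linear reproduction}
\[
\sum_{j=-k}^{G-1} \tau_j\, B_{j,k,t}(x)=x,\qquad \tau_j:=\frac{t_{j+1}+\cdots+t_{j+k}}{k}.
\]
Once these are in hand, multiplying the second by $\omega$ and the first by $\gamma$ and adding gives
\[
\sum_{j=-k}^{G-1} \bigl(\omega\tau_j+\gamma\bigr)B_{j,k,t}(x)=\omega x+\gamma=\mathcal{A}(x),
\]
which is exactly the spline in Eq.~\eqref{apd-eq:affine} with the prescribed $c_j=\omega\tau_j+\gamma$. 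So the entire proof boils down to establishing these two identities for an arbitrary $(k,G)$-grid (the grid may be chosen freely, e.g.\ uniform, since the statement is an equality of two affine functions and only requires that the knots be nondecreasing with strict inequalities between the $G+1$ central breakpoints).

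First I would verify both identities by induction on $k$ using the de Boor--Cox recurrence in Eq.~\eqref{eq:recur}. For the base case $k=0$, partition of unity is immediate from Eq.~\eqref{eq-bsp} since the intervals $[t_j,t_{j+1})$ for $j=0,\dots,G-1$ partition $[t_0,t_G)$; the $k=0$ case of linear reproduction is not directly needed because the formula for $\tau_j$ only makes sense for $k\ge 1$, so I would start the induction for the reproduction identity at $k=1$ (where it reads $\sum_j t_j B_{j,1,t}(x)=x$, checked by a short direct computation on each interval $[t_j,t_{j+1}]$ using the hat-function form of $B_{j,1,t}$). For the inductive step at general $k$, I substitute Eq.~\eqref{eq:recur} into $\sum_j B_{j,k,t}(x)$, regroup the two resulting sums by shifting the index in one of them, and collapse the coefficients $\frac{x-t_j}{t_{j+k}-t_j}+\frac{t_{j+k}-x}{t_{j+k}-t_j}=1$; the boundary terms involving $j=-k$ and $j=G-1$ are where the restriction $x\in[t_0,t_G]$ is used, because these B-splines vanish on that interval thanks to the padded knots $\underline{t_{-k},\dots,t_{-1}}$ and $\underline{t_{G+1},\dots,t_{G+k}}$.

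The main obstacle is the linear-reproduction step. The analogous regrouping there produces coefficients of the form $\frac{(x-t_j)\tau_j^{(k-1)}+(t_{j+k}-x)\tau_{j+1}^{(k-1)}}{t_{j+k}-t_j}$, and I need to recognize this as $\tau_j^{(k)}$, the knot average over $t_{j+1},\dots,t_{j+k}$. This is an algebraic identity about averages of consecutive knots; expanding both sides as linear polynomials in $x$ and comparing the constant and linear terms (the linear term gives $1$ automatically by the partition-of-unity step just proved at level $k-1$, and the constant term collapses to exactly $\tau_j^{(k)}$ after telescoping) completes the induction. An alternative, cleaner route would be to invoke Marsden's identity $(y-x)^k=\sum_j \prod_{i=1}^{k}(y-t_{j+i})\,B_{j,k,t}(x)$ and read off both identities by taking the coefficients of $y^k$ and $y^{k-1}$, but since the paper does not introduce Marsden's identity I would keep the direct inductive argument in the main text and relegate this alternative to a remark. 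After both identities are in place, the conclusion $\mathcal{A}(x)=\mathrm{spline}_{k,G}(x)$ on $[t_0,t_G]$ is immediate from the linear combination already displayed.
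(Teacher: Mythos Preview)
Your plan is correct and is essentially the paper's approach: both argue by induction on $k$ via the de Boor--Cox recurrence, check the base case $k=1$ directly on each subinterval, and in the inductive step split the regrouped sum into a piece handled by the inductive hypothesis and a piece handled by partition of unity (the paper cites the latter from de~Boor rather than proving it). The only packaging difference is that the paper inducts directly on the full affine combination $\sum_j c_j B_{j,k,t}$ instead of separating it into the two identities $\sum_j B_{j,k,t}=1$ and $\sum_j \tau_j B_{j,k,t}=x$.

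One correction to your sketch of the ``main obstacle'': after regrouping by $B_{j,k-1,t}$, the coefficient involves the order-$k$ averages $\tau_j^{(k)}$ and $\tau_{j-1}^{(k)}$ (not $\tau_j^{(k-1)}$, $\tau_{j+1}^{(k-1)}$), and it does \emph{not} collapse to the constant $\tau_j^{(k)}$. The correct simplification is
\[
\frac{(x-t_j)\tau_j^{(k)}+(t_{j+k}-x)\tau_{j-1}^{(k)}}{t_{j+k}-t_j}
=\frac{x}{k}+\frac{k-1}{k}\,\tau_j^{(k-1)},
\]
so the sum becomes $\tfrac{1}{k}\,x\sum_j B_{j,k-1,t}+\tfrac{k-1}{k}\sum_j \tau_j^{(k-1)}B_{j,k-1,t}=\tfrac{x}{k}+\tfrac{(k-1)x}{k}=x$. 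This $\tfrac{1}{k}+\tfrac{k-1}{k}$ split is exactly what the paper writes.
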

\begin{proof}
    We prove the lemma by induction on $k$. When $k=1$, the right-hand side can be calculated as
    \begin{equation*}
        \sum_{j=-1}^{G-1}c_jB_{j,1,t}(x)=\sum_{j=-1}^{G-1}\left(t_{j+1}\omega+\gamma\right)B_{j,1,t}(x).
    \end{equation*}
    We check on an interval $[t_j,t_{j+1}]$ for $\forall t_j(0\leq j\leq {G-1})$. Based on the definition of B-spline, only $B_{j-1,1,t}$ and $B_{j,1,t}$ are nonzero on $[t_j,t_{j+1}]$. Thus, we have
    \begin{equation*}
        \sum_{j=-1}^{G-1}c_jB_{j,1,t}(x)=(t_j\omega+\gamma)\frac{t_{j+1}-x}{t_{j+1}-t_j}+(t_{j+1}\omega+\gamma)\frac{x-t_j}{t_{j+1}-t_j}=\omega x+\gamma,\quad\mathrm{for\,}x\in [t_j,t_{j+1}].
    \end{equation*}
    Since it holds for all $j=0,1,\cdots,G-1$, the case where $k=1$ is proved.

    Assume that the lemma holds for order $k-1$. According to Definition \ref{def:bspline}, for $t_0\leq x\leq t_G$, the $k$ case satisfies
    \begin{align*}
        &\sum_{j=-k}^{G-1}c_jB_{j,k,t}(x)\\
        &=\sum_{j=-k+1}^{G-1}\left[c_j\frac{x-t_j}{t_{j+k}-t_j}+c_{j-1}\frac{t_{j+k}-x}{t_{j+k}-t_j}\right]B_{j,k-1,t}(x)\\
        &=\sum_{j=-k+1}^{G-1}\left[\frac{x-t_j}{t_{j+k}-t_j}\left(\frac{t_{j+1}+\cdots+t_{j+k}}{k} \omega+\gamma\right)+\frac{t_{j+k}-x}{t_{j+k}-t_j}\left(\frac{t_{j}+\cdots+t_{j+k-1}}{k} \omega+\gamma\right)\right]B_{j,k-1,t}(x)\\
        &=\sum_{j=-k+1}^{G-1}\left[\frac{t_{j+1}+\cdots+t_{j+k-1}+x}{k}\omega+\gamma\right]B_{j,k-1,t}(x)\\
        &=\frac{k-1}{k}\sum_{j=-k+1}^{G-1}\left[\frac{t_{j+1}+\cdots+t_{j+k-1}}{k-1}\omega+\gamma\right]B_{j,k-1,t}(x)+\frac{1}{k}(\omega x+\gamma)\sum_{j=-k+1}^{G-1}B_{j,k-1,t}(x).
    \end{align*}
    The first term is $\frac{k-1}{k}(\omega x+\gamma)$ according to the inductive hypothesis. \citet{deBoor1978practical} shows in B-spline property (iv) that $\sum_{j=-k+1}^{G-1}B_{j,k-1,t}(x)=1$ for $t_0\leq x\leq t_G$. The equation can be simplified as
    \begin{equation*}
        \frac{k-1}{k}(\omega x+\gamma)+\frac{1}{k}(\omega x+\gamma)=\omega x+\gamma.
    \end{equation*}
    Thus, the formula holds for $k$ and the lemma is proved.
\end{proof}

\begin{lemma*}[Lemma \ref{lem:repu}, ReLU-$k$ function]\label{apd:reluk}
We can find a $(k,2)$-grid $t=(\underline{t_{-k},\cdots,t_{-1}},t_0,0,t_2,\underline{t_3,\cdots,t_{k+2}})$ and a $k$-order spline function defined on $t$
    \begin{equation*}
        \mathrm{spline}_{k,2}(x)=\sum_{j=-k}^{1}\left[\left(\prod_{l=j+1}^{j+k}\sigma_1(t_l)\right) B_{j,k,t}(x)\right],
    \end{equation*}
such that $\sigma_{k}(x)=\mathrm{spline}_{k,2}(x)$ for $t_0\le x\le  t_2$.
\end{lemma*}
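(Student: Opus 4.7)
The plan is to choose the grid so that the defining sum collapses to a single $B$-spline term, and then to apply Lemma~\ref{lem:bspline} to turn that term into a sum of truncated powers in which only one summand survives on $[t_0,t_2]$. Concretely, I fix $t_1=0$ as prescribed and choose the outer knots so that $t_{-k},\dots,t_0\le 0$ and $t_2,\dots,t_{k+2}>0$, with $t_0<0<t_2$. For every $j\in\{-k,\dots,0\}$ the index range $\{j+1,\dots,j+k\}$ contains some $l\le 1$; at such an $l$ we have $t_l\le 0$, so $\sigma_1(t_l)=0$ and therefore $\prod_{l=j+1}^{j+k}\sigma_1(t_l)=0$. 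Only the $j=1$ term survives, giving
\begin{equation*}
    \mathrm{spline}_{k,2}(x)=\left(\prod_{l=2}^{k+1}t_l\right)B_{1,k,t}(x).
\end{equation*}

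Next, I apply Lemma~\ref{lem:bspline} to expand $B_{1,k,t}(x)$ as a linear combination of $\sigma_k(x-t_i)$ for $i=1,\dots,k+2$. Using $t_1=0$, the coefficient of $\sigma_k(x-t_1)=\sigma_k(x)$ after multiplication by the prefactor telescopes as
\begin{equation*}
    \left(\prod_{l=2}^{k+1}t_l\right)\cdot\frac{t_{k+2}-t_1}{\prod_{l=2}^{k+2}(t_l-t_1)}=\frac{t_{k+2}\prod_{l=2}^{k+1}t_l}{\prod_{l=2}^{k+2}t_l}=1,
\end{equation*}
so $\sigma_k(x)$ enters the expansion with weight exactly $1$. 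For each remaining index $i\ge 2$, positivity of the outer knots gives $t_i\ge t_2\ge x$ on $[t_0,t_2]$, whence $\sigma_k(x-t_i)=0$ identically on the interval. Hence on $[t_0,t_2]$ the entire sum reduces to $\sigma_k(x)$, which is the claim. (Note that for $x\in[t_0,0)$ both sides vanish, consistent with $B_{1,k,t}$ being supported in $[0,t_{k+2}]$.)

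The main obstacle I expect is purely bookkeeping: verifying that the coefficient of the $i=1$ term genuinely simplifies to $1$, which hinges on the cancellation between the prefactor $\prod_{l=2}^{k+1}t_l$ and the denominator $\prod_{l=2}^{k+2}(t_l-t_1)$, and ultimately on the designated choice $t_1=0$. If one preferred to avoid invoking Lemma~\ref{lem:bspline}, a cleaner alternative would be Marsden's identity with $y=0$ on $[t_0,t_2]$: the factor $t_1=0$ annihilates the Marsden coefficients for $j\in\{1-k,\dots,0\}$, and $B_{-k,k,t}$ has support $[t_{-k},0]$ so vanishes on $(0,t_2]$; what remains is $x^k=\prod_{l=2}^{k+1}t_l\cdot B_{1,k,t}(x)$ on $[0,t_2]$, matching $\sigma_k(x)$ there, while on $[t_0,0)$ both sides vanish by the support of $B_{1,k,t}$.
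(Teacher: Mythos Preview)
Your proposal is correct and follows essentially the same route as the paper: reduce the sum to the single term $\bigl(\prod_{l=2}^{k+1}t_l\bigr)B_{1,k,t}(x)$ by killing coefficients with $\sigma_1(t_l)=0$ for $l\le 1$, then expand via Lemma~\ref{lem:bspline} and discard the $i\ge 2$ terms on $[t_0,t_2]$, leaving the $i=1$ term whose coefficient simplifies to $1$ using $t_1=0$. Your write-up is slightly more explicit about the knot choices and the coefficient cancellation than the paper's, and the Marsden-identity alternative you sketch is a nice bonus but not needed.
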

\begin{proof}
    Note that $\sigma_1$ is actually the ReLU function. For $l=-k+1,\cdots,0,1$, condition $t_l\leq0$ infers $\sigma_1(t_l)=0$, so we have
    \begin{equation*}
        \mathrm{spline}_{k,2}(x)=\left(\prod_{l=2}^{k+1}t_l\right) B_{1,k,t}(x).
    \end{equation*}
    According to Lemma \ref{lem:bspline}, we convert B-spline into a linear combination of $\sigma_k$. Since $t_0\leq x\leq t_2$, we throw terms of $\sigma_{k}(x-t_i)$ where $i\geq 2$:
    \begin{align*}
        \mathrm{spline}_{k,t}(x)=\left(\prod_{l=2}^{k+1}t_l\right) B_{j,k,t}(x)
        =t_2\cdots t_{k+1}\frac{t_{k+2}-t_1}{\prod_{l=2}^{k+2}(t_l-t_1)}\sigma_{k}(x-t_1)
        =\sigma_{k}(x).
    \end{align*}
\end{proof}

\begin{theorem*}[Theorem \ref{th-p2k}, PowerMLP is a subset of KAN over interval]
\label{apd-p2k}
Use notations in Theorem \ref{th-k4p}.
    For any $E\in\mathbb{R}_+$, it holds
    \begin{equation*}
    \mathcal{P}_{d,w,k,p}\subset 
    {\mathcal{K}}_{2d,2w,k,2,\mathcal{O}(kp)}
    \hbox{ {\rm over} } [-E,E]^n.
    \end{equation*}
\end{theorem*}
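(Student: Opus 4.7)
The plan is to reduce the inclusion to a per-layer statement and then compose. Since both networks are compositions of layers, it suffices to show that one PowerMLP layer
$\mathbf{z}=\sigma_k(\omega\mathbf{x}+\gamma)+\alpha b(\mathbf{x})$
of input dimension $n$ and output dimension $m$ (with $m,n\le w$) can be realized exactly by two consecutive KAN layers of width at most $m+n\le 2w$ using $(k,2)$-splines, at a cost of $\mathcal{O}(k)$ KAN parameters per nonzero PowerMLP parameter. Iterating this simulation $d$ times then yields a KAN of depth $2d$, width $2w$, and $\mathcal{O}(kp)$ nonzero parameters.

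For the single-layer simulation I would use exactly the construction already pictured in Figure \ref{fig:2layerkan}. Introduce an intermediate vector $\mathbf{y}\in\mathbb{R}^{m+n}$ whose first $m$ coordinates carry the affine transformation $(\omega\mathbf{x}+\gamma)_q$ and whose last $n$ coordinates carry identity copies of the input components $x_{q-m}$. Each entry of the first KAN activation matrix is then a single-variable affine map ($\omega_{q,p}x_p+\gamma_{q,p}$ for rows $q\le m$, and $\delta_{q-m,p}x_p$ for rows $q>m$). By Lemma \ref{lem:affine}, each of these affine maps coincides with a $(k,2)$-spline on $[t_0,t_2]$ once the $(k,2)$-grid is chosen so that its inner interval covers the input range $[-E,E]$. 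Setting the basis-function coefficients $u$ of the first KAN layer to zero makes that layer implement the affine transformation together with the identity copies exactly.

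The second KAN layer applies ReLU-$k$ on the first $m$ coordinates of $\mathbf{y}$ and the basis function on the last $n$ coordinates. For diagonal entries $q\le m$ I take $\phi_{2,r,q}=\delta_{r,q}\sigma_k$, realized by a $(k,2)$-spline via Lemma \ref{lem:repu} on a grid whose inner interval contains the range of $y_q$; for $q>m$ I set the spline coefficient to zero and the basis-function coefficient to $\alpha_{r,q-m}$. Row-wise summation then produces $z_r=\sigma_k((\omega\mathbf{x}+\gamma)_r)+\sum_p \alpha_{r,p} b(x_p)$, matching the PowerMLP layer output. Each representing $(k,2)$-spline uses $k+2$ B-spline coefficients, so every nonzero PowerMLP parameter expands to $\mathcal{O}(k)$ KAN parameters, giving the $\mathcal{O}(kp)$ total.

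The main obstacle is propagating the bounded-domain hypothesis through depth, since Lemmas \ref{lem:affine} and \ref{lem:repu} supply pointwise identities only on their respective knot intervals. I would address this by an outer induction on the layer index: define $E_\ell$ to be an upper bound on the magnitude of the outputs of the first $\ell$ PowerMLP layers over all inputs in $[-E,E]^n$ (finite because the network has finitely many layers, finitely many parameters, and continuous activations), and then choose the $(k,2)$-grids used in the $\ell$-th simulated pair of KAN layers so that $[-E_\ell,E_\ell]$ lies strictly inside the valid spline interval of both lemmas. With this choice the pointwise equality at each layer composes cleanly, so the assembled KAN reproduces the PowerMLP on all of $[-E,E]^n$, completing the proof.
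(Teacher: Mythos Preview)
Your proposal is correct and follows essentially the same approach as the paper: represent each PowerMLP layer by two KAN layers exactly as in Figure~\ref{fig:2layerkan}, using Lemma~\ref{lem:affine} for the first (affine plus identity copies with basis coefficients zeroed) and Lemma~\ref{lem:repu} for the second ($\sigma_k$ on the first $m$ coordinates, $\alpha_{r,q-m}b$ on the rest), then count width $m+n\le 2w$ and $\mathcal{O}(k)$ parameters per activation. Your inductive propagation of the bounded-domain hypothesis via the sequence $E_\ell$ is in fact more explicit than the paper's, which simply remarks that each layer's input is bounded because the parameters are bounded and the operations are continuous.
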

\begin{proof}
    It suffices to prove that a PowerMLP layer
    \begin{equation}
        \label{apd:powermlp}
        \mathbf{z}=\alpha b(\mathbf{x})+\sigma_{k}(\omega\mathbf{x}+\gamma)
    \end{equation}
    can be represented by two KAN layers over bounded intervals. Suppose the input of the network $\mathbf{x}_0\in[-E,E]^{n_0}\subset\mathbb{R}^{n_0}$, then the input of each PowerMLP layer is bounded since each parameter of PowerMLP is bounded and each operation in PowerMLP will not cause infinity for bounded input.
    Thus, we denote the input of the layer as $\mathbf{x}\in[-E',E']^{n}\subset\mathbb{R}^{n}$ and output as $\mathbf{z}\in\mathbb{R}^m$. From Lemma \ref{lem:affine},  spline function can represent any transformation over bounded intervals with any $G$.
    Here we set $G=2,   t_0=-E', t_2=E'$. Then, for the first layer of KAN, we set coefficients before the basis function to be $0$ and choose the activation function as follows:
    \begin{equation*}
    \phi_{1,q,p}(x_p)=
    \left\{\begin{aligned}
        &\omega_{q,p}x_p+\gamma_{q,p},&\mathrm{for\,}1\leq q\leq m,\\
        &\delta_{q-m,p}x_p,&\mathrm{for\,}1\leq q-m\leq n,\\
    \end{aligned}\right.
    \end{equation*}
    where $\delta_{ij}$ equals to $1$ if $i=j$ and $0$ otherwise. Thus, we have $y_q=\sum_{p=1}^n\omega_{q,p}x_p+\gamma_{q,p}$ for $1\leq q\leq m$ and $y_q=x_{q-m}$ for $m+1\leq q\leq m+n$. It can be easily proved that $\mathbf{y}$ is also bounded, say in $[-E'',E'']$.

    In the second layer, we represent the addition of ReLU-$k$ activation and the basis function. By Lemma \ref{lem:repu}, spline function can represent  $\sigma_k$ over bounded intervals.
    Here we set $t_0=-E'', t_2=E''$.
    So for $1\leq r\leq m$, we set $\phi_{2,r,q}(y_q)$ as follows:
    \begin{align*}
        \phi_{2,r,q}(y_q)=
    \left\{\begin{aligned}
            &\delta_{r,q}\sigma_k(y_q),&\mathrm{for\,}1\leq q\leq m,\\
            &\alpha_{r,q-m}b(y_q),&\mathrm{for\,}1\leq q-m\leq n.\\
        \end{aligned}\right.
    \end{align*}
    By direct computation, we get the $r$-th component of the output $\mathbf{z}$ to be
    \begin{equation*}
        z_r=\sigma_k(y_r)+\sum_{q=m+1}^{m+n}\alpha_{r,q-m}b(y_q)
        =\sigma_k\left(\sum_{p=1}^n\omega_{r,p}x_p+\gamma_{r,p}\right)+\sum_{p=1}^{n}\alpha_{r,p}b(x_p),
    \end{equation*}
    which is the same as the output of Eq. \eqref{apd:powermlp}.

    In this way, the first layer has $mn+n$ activation functions and each activation function has $k+3$ parameters, totaling $(mn+n)(k+3)$ nonzero parameters. While the second layer has $mn$ activation functions that have only $1$ parameter,  and $m$ activation functions that have $k+3$  parameters respectively, totaling $mn+m(k+3)$ nonzero parameters.
    In summary, the two-layer KAN has $m+n$ width and $(k+4)mn+(k+3)(m+n)$ nonzero parameters. Since both KAN and PowerMLP are composition of layers, we prove that a PowerMLP of $w$ width, $d$ depth, $p=O(mn)$ nonzero parameters can be represented as a KAN with same order, $2d$ depth, $2w$ width and $\mathcal{O}(kp)$ nonzero parameters.
\end{proof}

\subsection{Calculation Process of Section 4.4}
Note that FLOPs of ReLU is $0$. One layer of MLP can be written as $\mathrm{ReLU}(\omega\mathbf{x}+\gamma)$, where
$\mathbf{y}=\omega\mathbf{x}$ contains $d_{in}d_{out}$ times multiplication and $(d_{in}-1)d_{out}$ times addition, and $\mathbf{y}+\gamma$ contains $d_{out}$ times addition, totaling $2d_{in}d_{out}$ FLOPs.

FLOPs of one layer of KAN can be found in \cite{yu2024kanmlpfairercomparison}.

One layer of PowerMLP can be written as
\begin{equation*}
    \sigma_k(\omega\mathbf{x}+\gamma)+\alpha b(\mathbf{x}).
\end{equation*}
FLOPs of $\mathbf{y}=\omega\mathbf{x}+\gamma$ is $2d_{in}d_{out}$ that is calculated before. FLOPs of $\mathbf{r}= \sigma_k(\mathbf{u})$ is $(k-1)d_{out}$. FLOPs of $\mathbf{s}=\alpha b(\mathbf{x})$ includes $d_{in}\lambda$ for the basis function and $(d_{in}-1)d_{out}$ for the multiplication of $\alpha$. The final addition $\mathbf{r}+\mathbf{s}$ needs $d_{out}$ FLOPs. So the total FLOPs of one layer of PowerMLP is
$    4 d_{in} d_{out}+(k-1) d_{out}+\lambda d_{in}.$

\section{Experiments}
\subsection{Details of Experiments}
All KANs used in experiments are the latest version (0.2.5) up to 2024-8-14. The main code of PowerMLP is included in the supplementary materials:
\begin{itemize}
    \item We construct PowerMLP network in \texttt{powermlp.py},
    \item Dataset including special functions, Titanic, Income, Spam, AG\_NEWS, CoLA, MNIST, SVHN, CIFAR-10 input in \texttt{data\_input.py},
    \item We include an extra \texttt{grid\_search.py} for easily searching on network shape and learning rate,
    \item Code for each experiment are jupyter notebooks in folder \texttt{experiments}.
\end{itemize}
In all experiments, we use the default configuration in KAN's paper to conduct experiments on KAN. For MLP and PowerMLP, we use Adam optimizer and a modified warm-up learning rate scheduler for training and use grid search on 10 different learning rates from $1\times10^{-4}$ to $1\times10^{-1}$ for better results.

\subsubsection{AI for Science Tasks}
In function fitting, we choose two sizes for the networks:
\textbf{(1) Small size: } KAN with $k=3,G=3$, shape $[2,1,1]$, and $24$ parameters; MLP with shape $[2,6,1]$ and $25$ parameters; $3$-order PowerMLP with shape $[2,4,1]$ and $25$ parameters;
\textbf{(2) Large size:} KAN with $k=3,G=100$, shape $[2,2,1,1]$ and $735$ parameters; MLP with shape $[2,32,18,1]$ and $709$ parameters; $3$-order PowerMLP with shape $[2,32,8,1]$ and $689$ parameters. We train each network for $5000$ epochs on random seed $42$, $114$, $514$. In knot theory, we take the KAN result from \citet{liu2024kan}, which used a KAN with shape $[17,1,14]$, $(3,3)$-spline function and $248$ parameters. Correspondingly, we use $3$-order PowerMLP with shape $[17,4,14]$ and $210$ parameters. Each network is trained for $50$ epochs.

\subsubsection{More Complex Tasks}
Shape and number of parameters of KAN, MLP and PowerMLP are shown in the following Table~\ref{tab:my_label}. KAN use $(3,3)$-spline and PowerMLP is $3$-order. Each network is trained for $500$ epochs.
\begin{table}[h]
    \centering
    \begin{tabular}{c|c c c c}
        \hline
        \multirow{2}{*}{Dataset} & \multicolumn{3}{c}{shape} & \multirow{2}{*}{\#Params}\\
         & KAN & MLP & PowerMLP & \\
        \hline
        Titanic & $[9,1,2]$ & $[9,8,2]$ & $[9,4,2]$ & \textasciitilde $100$ \\
        Income & $[108,1,2]$ & $[108,8,2]$ & $[108,4,2]$ & \textasciitilde $900$ \\
        Spam & $[100,1,2]$ & $[100,8,2]$ & $[100,4,2]$ & \textasciitilde $800$ \\
        AG\_NEWS & $[1000,32,4]$ & $[1000,256,4]$ & $[1000,128,4]$ & \textasciitilde $2.6\times10^{5}$ \\
        MNIST & $[784,8,8,10]$ & $[784,64,32,10]$ & $[784,32,32,10]$ & \textasciitilde $5\times10^{4}$ \\
        SVHN & $[1024,16,16,10]$ & $[1024,128,64,10]$ & $[1024,64,64,10]$ & $1.4\times10^{5}$ \\
        \hline
    \end{tabular}
    \caption{Shape and number of parameters of KAN, MLP and PowerMLP }
    \label{tab:my_label}
\end{table}

\subsubsection{Training Time} Experiments are conducted on a single NVIDIA GeForce RTX 4090 GPU with repeating each task $10$ times to take an average, and networks in each task are trained
with the same learning rate and hyperparameters. To avoid other influencing factors, we do not use \texttt{torch.utils.data.DataLoader} as in other experiments. The whole training data are in a large tensor and we input the whole training data tensor at one time in each epoch, which means a full-batch training. We start timing before the forward inference of the first epoch and stop timing after the parameter update of the last epoch.

\subsubsection{Ablation Study}
All the network are trained with Adam optimizer, searching on $10$ different learning rates for the best result. For better comparison, we do not include the input and output layer here; thus, depth and width here denote the number and the max width of hidden layers, which is different from the main paper but does not influence understanding. For the fixed-width experiment, we set the hidden layer width of all networks to be $4$. For the fixed-depth experiment, we search the width from $4$ to $32$ and convert variation of width to variation of number of parameters.

\subsection{Supplementary Experiments}
To supplement, we conduct two additional experiments, in which we replace MLP to PowerMLP in a CNN to solve the computer vision task and a RNN to solve the text classification task. The code of this section is provided in \texttt{modified\_net.py}.
\begin{figure}[!h]
    \centering
    \includegraphics[width=0.9\linewidth]{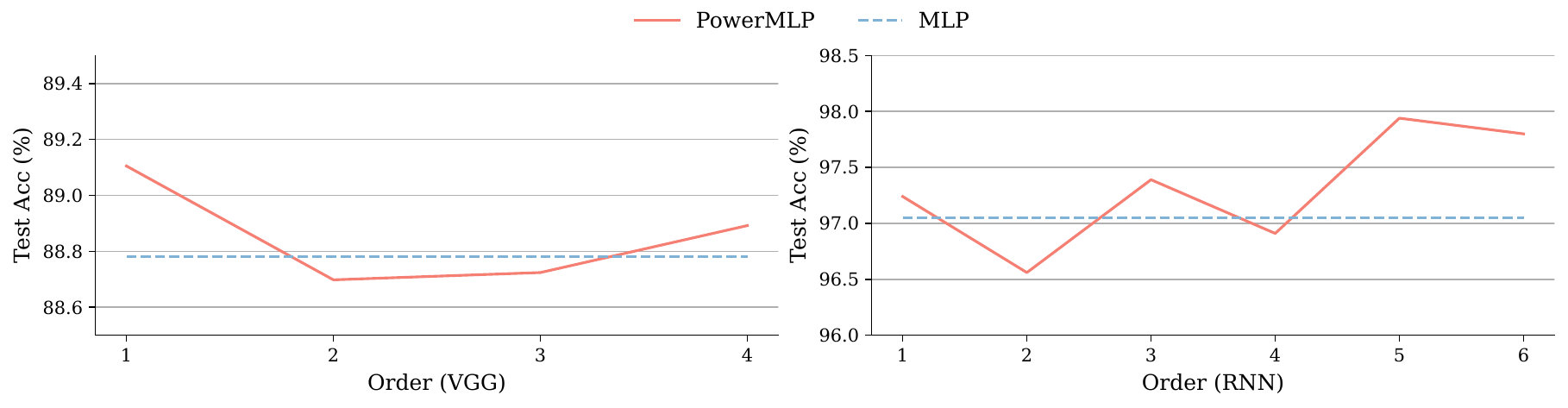}
    \caption{Supplementary Experiments.}
    \label{fig:supp}
\end{figure}

\subsubsection{Computer Vision}
We replace the MLP to PowerMLP in a small VGG network with 4 convolutional layers for CIFAR-10 classification.
We train normal VGG and PowerMLP-adapted VGG with Adam optimizer for $500$ epochs, searching on $10$ different learning rate for the best result. Additionally, we change the order of PowerMLP from $1$ to $5$, to find the influence of different order, while $5$-order fails to converge in all random seeds we have tried.

In the right of Figure~\ref{fig:supp}, blue line shows the test accuray of normal VGG and red solid line shows the variation of test accuray of PowerMLP-adapted VGG relative to the order of PowerMLP. With order increasing from $2$ to $4$, test accuracy raises accordingly, indicating the increasing expressive ability. Interestingly, order of $1$ provides an unexpected improvement in test accuracy, which may benefit from a better convergence of PowerMLP. Overall, since convolutional layer play the main role in processing images, the difference in testing accuracy is not significant.

\subsubsection{Text Classification}
We replace the MLP to PowerMLP in a small RNN with a 2-layer LSTM for CoLA classification.
We train normal RNN and PowerMLP-adapted RNN with Adam optimizer for $100$ epochs, searching on $10$ different learning rate for the best result. Additionally, we change the order of PowerMLP from $1$ to $6$, to find the influence of different order.

In the left of Figure~\ref{fig:supp}, blue line shows the test accuray of normal RNN and red solid line shows the variation of test accuray of PowerMLP-adapted RNN relative to the order of PowerMLP.
Overall, PowerMLP can improve the performance of RNNs, with higher order resulting in better performance. However, the improvement is not significant and will reach a limit as order increasing, since the LSTM part of the network also influences the performance a lot.

\end{document}